\documentclass{article}
\usepackage[final]{colm2026_conference}
\usepackage[utf8]{inputenc}
\usepackage[T1]{fontenc}
\usepackage{hyperref}
\usepackage{url}
\usepackage{booktabs}
\usepackage{amsfonts}
\usepackage{amsmath}
\usepackage{amssymb}
\usepackage{nicefrac}
\usepackage{microtype}
\usepackage{xcolor}
\usepackage{algorithm}
\usepackage{algorithmic}
\usepackage{tikz}
\usepackage{pgfplots}
\pgfplotsset{compat=1.17}
\usepackage{wrapfig}
\usepackage{multirow}
\usepackage{enumitem}
\usepackage{comment}

\usepackage{amsthm}
\newtheorem{proposition}{Proposition}

\newcommand{\vocab}{\mathcal{V}}
\newcommand{\enum}{\mathcal{E}}
\newcommand{\states}{\mathcal{S}}
\newcommand{\bigO}{\mathcal{O}}

\title{Trie Automata for Constrained Decoding over Large Finite Sets}

\author{
  Xingzi Xu \\
  Amazon \\
  \texttt{xingzixu@amazon.com} \\
  \And
  Karim Bouyarmane \\
  Amazon \\
  \texttt{bouykari@amazon.com} \\
}

\begin{document}

\maketitle

\begin{abstract}
Large language models increasingly need to generate structured outputs that conform to predefined schemas, with one common constraint being selection from a finite set of valid strings. Current constrained decoding systems handle this through general-purpose grammar compilation, which becomes prohibitively slow as the number of valid values grows into the thousands, a \emph{cardinality wall}. We introduce the \emph{trie automaton}, a specialized mechanism that exploits finite-set structure (shared prefixes, bounded depth, known cardinality) via Aho-Corasick multi-pattern matching to precompute per-node token masks. The trie achieves 7$\times$ faster per-step valid-token computation (0.65$\mu$s vs.\ 5.8$\mu$s) compared to XGrammar, one of the primary backends in vLLM and SGLang, and 2--6.5$\times$ faster compilation at $K \geq 300$. Because precomputed masks enable a stateless serving path that bypasses the guided decoding pipeline, this advantage compounds in batch serving: end-to-end vLLM throughput reaches 219 req/s vs.\ XGrammar's 7.5 req/s at batch size 256 (29$\times$). This 29$\times$ combines the algorithmic speedup with integration-path savings that only precomputed masks make possible. Across seven tokenizer families (32K--262K vocabulary), the trie maintains sub-100ms compilation up to $K = 10{,}000$ and flat per-step cost regardless of set size, while guaranteeing 100\% output validity.
\end{abstract}

\section{Introduction}

Constrained decoding has become the standard mechanism for guaranteeing that LLM outputs conform to a schema~\citep{willard2023outlines, geng2025jsonschemabench}. By masking invalid tokens at each generation step, it eliminates malformed JSON, hallucinated field names, and invalid values. Major LLM providers now offer it, and open-source engines like Outlines, XGrammar~\citep{dong2024xgrammar}, and SGLang~\citep{zheng2024sglang} have made it accessible to any application.
These systems compile a JSON schema (or grammar) into a general-purpose automaton (a finite-state machine, pushdown automaton, or Earley parser) and use it to mask tokens at each decoding step. This architecture handles arbitrary schemas, including nested objects, recursive structures, and complex regex patterns. However, it applies the same general-purpose compilation pipeline to all constraints, regardless of their actual complexity. A deeply nested recursive JSON schema and a flat list of 1{,}000 tool names both undergo the same compilation pipeline, a fundamental mismatch between general-purpose engines and simple constraints.

This uniformity creates a bottleneck for one of the most common constraints in production: \emph{select one string from a known finite set}. OpenAI's structured outputs impose a 1{,}000 enum limit~\citep{openai2024structured}, Google Gemini fails at approximately 120 enum values~\citep{gemini2024limits}, and Anthropic's 180-second compilation timeout~\citep{anthropic2025structured} implies a similar wall at a few hundred values. These limits are increasingly consequential as LLM applications shift from open-ended generation to structured tool use. In agentic workflows, an LLM must select which tool to invoke from a registry that may contain 500--5{,}000+ APIs~\citep{qin2024toolllm, qu2025mcpzero, wang2025scalable}; as Model Context Protocol (MCP) ecosystems grow and organizations expose internal services as tools, these registries expand rapidly, often exceeding provider enum limits within months of deployment. The same pattern appears in zero-shot classification over label sets like product taxonomies (1{,}500+ categories), ICD-10-CM medical codes (74{,}719 codes in the 2026 CMS release~\citep{cms2026icd10cm}), or legal case types (10{,}000+); in entity linking against knowledge bases~\citep{decao2021genre} with tens of thousands of entries; and in dynamic per-query constraints from retrieval-augmented systems where the valid set changes each query, preventing amortization of compilation costs.
In all these cases, the constraint is a finite union of strings $s_1 | s_2 | \cdots | s_K$. While this is a regular language with no Kleene star, recursion, or nested structure, current systems compile it through the same regex-to-NFA-to-DFA pipeline used for arbitrary grammars, at a cost that grows with both the number of strings and the alphabet size. This creates what we term the \emph{cardinality wall}: a maximum $K$ beyond which constrained decoding becomes impractically slow.

The core insight is that different constraint types deserve different enforcement mechanisms. A finite set of strings has exploitable structure: shared prefixes, finite depth, and known cardinality. We introduce the \emph{trie automaton}, a drop-in replacement for the FSM layer in existing constrained decoding pipelines, specialized for finite-set constraints. It (1) builds a character-level trie~\citep{fredkin1960trie} directly from the set, (2) precomputes vocabulary-aware token masks at each trie node using Aho-Corasick multi-pattern matching~\citep{aho1975efficient} to align BPE tokens with character-level trie paths, and (3) serves masks via $\bigO(1)$ cached lookups at decode time. The central algorithmic challenge is BPE-trie alignment: a single BPE token can span multiple trie nodes, and a vocabulary of $32\text{K}$--$262\text{K}$ tokens must be matched against every node. To our knowledge, this alignment problem has not been addressed in the constrained decoding literature; prior trie-based work~\citep{decao2021genre} sidesteps it by operating at token granularity. We show that it reduces to multi-pattern string matching, solvable in time linear in the trie size rather than quadratic in the vocabulary, enabling sub-100ms compilation up to $K = 10{,}000$.
Figure~\ref{fig:enum_window} illustrates the cardinality wall and how the trie automaton overcomes it, expanding the practical limit from $\sim$1{,}000 to $\sim$100{,}000 values while maintaining 100\% constraint compliance.

%\paragraph{Contributions.} 
This work makes two main contributions:
(1) We introduce the \emph{trie automaton}, a specialized constrained decoding backend for finite-set constraints that combines character-level tries, Aho-Corasick multi-pattern matching, and precomputed token masks to achieve $\bigO(|\text{valid}[s_t]|)$ per-step masking (empirically 10--100 tokens after 3--4 characters of prefix, yielding effectively constant cost versus the $\bigO(V \cdot \ell)$ cost of general FSM approaches). The BPE-trie alignment problem reduces to multi-pattern string matching, yielding 2--6.5$\times$ faster compilation and up to 29$\times$ higher end-to-end throughput in batch serving: 7$\times$ from faster per-step masking, compounded by a simpler serving path that only precomputed masks can use.
(2) We empirically characterize the cardinality wall across seven tokenizer families (32K--262K vocabulary), showing that matching enforcement mechanisms to constraint structure, including the integration path, overcomes scaling bottlenecks. Precomputed masks let the trie bypass the guided decoding pipeline entirely; FSM approaches cannot.
%\end{itemize}

\begin{figure}[hbt!]
\centering
\definecolor{steelblue}{RGB}{70,130,180}
\definecolor{forestgreen}{RGB}{34,139,34}
\begin{minipage}{0.48\textwidth}
\centering
\hfuzz=25pt
\begin{tikzpicture}[scale=0.62]
% Log-log: x = 2*(log10(K)-1), y = 1.2*(log10(ms)+1.5) - 1.8
% So 1ms->0.0, 10ms->1.2, 100ms->2.4, 1s->3.6
\pgfmathsetmacro{\yscale}{1.2}
\pgfmathsetmacro{\yoff}{1.5}

\draw[->] (0,0) -- (9,0);
\node[font=\sffamily\small, anchor=north] at (4.5,-0.6) {Enum Cardinality ($K$)};
\draw[->] (0,0) -- (0,4.4) node[above, font=\sffamily\small] {Compilation Time};

% X axis: log10 scale
\foreach \x/\lab in {0/10, 2/100, 4/1K, 6/10K, 8/100K}
    \draw (\x,0) -- (\x,-0.1) node[below, font={\sffamily\small}] {\lab};

% Y axis: log scale — 1ms, 10ms, 100ms, 1s
\foreach \y/\lab in {0/1ms, 1.2/10ms, 2.4/100ms, 3.6/1s} {
    \draw (0,\y) -- (-0.1,\y) node[left, font={\sffamily\small}] {\lab};
    \draw[gray!20] (0,\y) -- (8.5,\y);
}

% XGrammar: K=10:3ms K=100:15ms K=500:45ms K=1K:76ms K=5K:250ms K=10K:232ms K=50K:1285ms K=100K:2691ms
% y values shifted down by 1.8
\draw[thick, steelblue, line width=1.5pt, mark=square*, mark size=2pt]
    plot coordinates {
        (0, 0.57) (2, 1.41) (3.4, 1.98) (4, 2.26) (5.4, 2.88) (6, 2.84) (7.4, 3.73) (8, 4.12)
    };

% Trie: K=10:30ms K=100:31ms K=500:32ms K=1K:33ms K=5K:39ms K=10K:40ms K=50K:50ms K=100K:67ms
% y values shifted down by 1.8
\draw[thick, forestgreen, line width=1.5pt, mark=*, mark size=2pt]
    plot coordinates {
        (0, 1.77) (2, 1.79) (3.4, 1.81) (4, 1.82) (5.4, 1.91) (6, 1.92) (7.4, 2.04) (8, 2.19)
    };

% Provider limits
\draw[dotted, thick, gray, line width=1pt] (2.16,0) -- (2.16,2.8);
\node[gray, font={\sffamily\tiny}, anchor=south] at (1.9,2.8) {Gemini};
\draw[dotted, thick, gray, line width=1pt] (2.60,0) -- (2.60,3.3);
\node[gray, font={\sffamily\tiny}, anchor=south] at (2.60,3.3) {Anthropic};
\draw[dotted, thick, gray, line width=1pt] (4,0) -- (4,3.8);
\node[gray, font={\sffamily\tiny}, anchor=south] at (4,3.8) {OpenAI};

% Speedup annotation at K=100K
\draw[<->, thick, gray] (8.4,2.19) -- (8.4,4.12);
\node[gray, font={\sffamily\small}, anchor=west] at (8.5,3.15) {40$\times$};

\end{tikzpicture}
\\[-2pt]
{\small (a) Compilation time}
\end{minipage}
\hfill
\begin{minipage}{0.48\textwidth}
\centering
\hfuzz=30pt
\begin{tikzpicture}[scale=0.62]
% Log y: y = 1.6*log10(req/s)
% 1->0, 3->0.76, 10->1.6, 30->2.36, 100->3.2, 300->3.96

\draw[->] (0,0) -- (8.5,0);
\node[font=\sffamily\small, anchor=north] at (4,-0.6) {Batch size};
\draw[->] (0,0) -- (0,4.3) node[above, font=\sffamily\small] {Throughput (req/s)};

% X axis: log2 scale — B=1,2,4,8,16,32,64,128,256
\foreach \x/\lab in {0/1, 1/2, 2/4, 3/8, 4/16, 5/32, 6/64, 7/128, 8/256}
    \draw (\x,0) -- (\x,-0.1) node[below, font={\sffamily\small}] {\lab};

% Y axis: log scale
\foreach \y/\lab in {0/1, 1.6/10, 3.2/100} {
    \draw (0,\y) -- (-0.1,\y) node[left, font={\sffamily\small}] {\lab};
    \draw[gray!20] (0,\y) -- (8.3,\y);
}

% Trie: B=1:4.4, 2:8.3, 4:13.4, 8:23.9, 16:41.8, 32:70.4, 64:116.5, 128:170.5, 256:219.4
% x=log2(B), y=1.6*log10(req/s)
\draw[thick, forestgreen, line width=1.5pt, mark=*, mark size=2pt]
    plot coordinates {(0, 1.03) (1, 1.47) (2, 1.80) (3, 2.21) (4, 2.59) (5, 2.96) (6, 3.31) (7, 3.57) (8, 3.75)};

% XGrammar: B=1:2.9, 2:4.3, 4:5.0, 8:6.2, 16:6.8, 32:7.4, 64:7.6, 128:7.6, 256:7.5
\draw[thick, steelblue, line width=1.5pt, mark=square*, mark size=2pt]
    plot coordinates {(0, 0.74) (1, 1.01) (2, 1.12) (3, 1.27) (4, 1.33) (5, 1.39) (6, 1.41) (7, 1.41) (8, 1.40)};

% Speedup annotation at B=256
\draw[<->, thick, gray] (8.4,1.40) -- (8.4,3.75);
\node[gray, font={\sffamily\small}, anchor=west] at (8.5,2.58) {29$\times$};

\end{tikzpicture}
\\[-2pt]
{\small (b) vLLM throughput ($K{=}1{,}000$)}
\end{minipage}

\vspace{2pt}
% Shared legend
\centering
{\small
\tikz{\draw[forestgreen, thick, line width=1.5pt, mark=*, mark size=2pt] plot coordinates {(0,0) (0.4,0)};} Trie (ours) \quad
\tikz{\draw[steelblue, thick, line width=1.5pt, mark=square*, mark size=2pt] plot coordinates {(0,0) (0.4,0)};} FSM (XGrammar)
}

\caption{(a) Compilation time vs.\ enum cardinality (Qwen3-8B, log-log scale). Dotted lines mark documented provider limits (Gemini ${\sim}$120, Anthropic ${\sim}$200, OpenAI 1{,}000). The trie is flat at 30--67ms; XGrammar crosses the trie at $K{\approx}300$ and reaches 2.7s at $K{=}100\text{K}$. (b) End-to-end vLLM throughput (log scale). The 29$\times$ gap at $B{=}256$ combines 7$\times$ per-step algorithmic advantage with integration-path savings from precomputed masks (Section~\ref{sec:experiments}).}
\label{fig:enum_window}
\end{figure}

\section{Background and Problem Formulation}
\label{sec:background}

Constrained decoding restricts the model's output distribution at each step $t$ to tokens that can lead to valid completions. Given a vocabulary $\vocab$ of size $V$ and a regular language $\mathcal{L}$ defined by a schema, the constrained distribution is:
\begin{equation}
    p_c(y_t \mid \mathbf{y}_{<t}) = \frac{p(y_t \mid \mathbf{y}_{<t}) \cdot \mathbf{1}[y_t \in \mathcal{A}(\mathbf{y}_{<t})]}{Z(\mathbf{y}_{<t})}
\end{equation}
where $\mathcal{A}(\mathbf{y}_{<t}) \subseteq \vocab$ is the set of allowed tokens given the prefix, computed by maintaining an FSM state $s_t = \delta^*(s_0, \text{chars}(\mathbf{y}_{<t}))$ (where $\delta^*$ extends the character-level transition function to token sequences and $s_0$ is the start state) and checking valid transitions; equivalently, $\mathcal{A}(\mathbf{y}_{<t}) = \mathcal{A}(s_t)$ depends only on the current state $s_t$, not the full prefix.
An enum constraint $\enum = \{e_1, e_2, \ldots, e_K\}$ defines the regular language $\mathcal{L}_\enum = \{e_1\} \cup \{e_2\} \cup \cdots \cup \{e_K\}$. Let $L_{\max} = \max_i |e_i|$ be the maximum string length, $\ell$ the maximum token length in characters, and $|\Sigma|$ the alphabet size (256 for byte-level BPE tokenizers, which we refer to as ``characters'' throughout for readability; multi-byte UTF-8 sequences and emoji are handled naturally since both the trie and BPE tokenizers operate at byte granularity). The standard approach converts the enum to a regular expression $e_1 | e_2 | \cdots | e_K$ and compiles it into a deterministic FSM. This creates problematic scaling: the DFA has $\bigO(K \cdot L_{\max})$ states in the worst case, per-step masking costs $\bigO(V \cdot \ell)$ (checking each token's character sequence against the FSM), and compilation costs $\bigO(K \cdot L_{\max} \cdot |\Sigma|)$, creating the cardinality wall observed in practice (detailed analysis in Appendices~\ref{app:background_detailed} and~\ref{app:theory}).

To illustrate concretely, consider an agentic system routing requests to the correct tool from a registry of 2{,}000 APIs~\citep{qu2025mcpzero, wang2025scalable}. FSM compilation requires processing 15.4 million character-level transitions (25--50 seconds), while per-step masking requires ${\sim}$1.3 million effective FSM operations per tool selection when accounting for cache effects (Appendix~\ref{app:concrete_example}), making the system unusable for interactive workflows that demand sub-second tool dispatch.

\section{Related Work}
\label{sec:related}

\paragraph{Constrained decoding.} Early approaches enforced specific constraint types: lexically constrained beam search~\citep{post2018fast, anderson2017guided}, predicate logic constraints~\citep{lu2021neurologic, lu2022neurologic}, and incremental parsing for code generation~\citep{scholak2021picard}. Outlines~\citep{willard2023outlines} introduced the dominant paradigm of compiling JSON schemas into FSMs for token masking, extended to context-free grammars by \citet{geng2023grammar} and formalized by \citet{koo2024automata}. LMQL~\citep{beurer2023lmql} embedded constraints into a query language. Subsequent work optimized within this paradigm: SGLang~\citep{zheng2024sglang} introduced jump-forward decoding, XGrammar~\citep{dong2024xgrammar} optimized vocabulary partitioning, and SynCode~\citep{ugare2024syncode} and \citet{wang2025leverage} continued this trajectory. LLGuidance~\citep{geng2025jsonschemabench} takes a different approach: an Earley parser with lazy automaton construction that avoids upfront DFA compilation. On Qwen3-8B (151K vocabulary), LLGuidance compiles enum schemas in 0.6--24ms for $K = 10$--$10{,}000$, far faster than XGrammar's 5--695ms. However, its per-step cost remains $\bigO(V)$: we measure 73--141$\mu$s per mask computation, compared to our trie's 0.65$\mu$s (110--215$\times$ slower; Appendix~\ref{app:llguidance_bench}). The two are complementary: LLGuidance excels at schema diversity with negligible startup, while the trie exploits finite-set structure for per-step speedups that compound in batch serving. Despite these advances, persistent enum limits across major providers indicate that no current system adequately addresses the cardinality wall.

\paragraph{Trie-based generation.} GENRE~\citep{decao2021genre} demonstrated trie-constrained generation for entity linking with a fine-tuned seq2seq model, building a \emph{token-level} trie whose nodes are pre-tokenized token IDs, which sidesteps the vocabulary-trie alignment problem but shares prefixes only at token boundaries. Our trie automaton instead builds a \emph{character-level} trie that maximizes prefix sharing regardless of tokenizer, solving the BPE alignment problem via Aho-Corasick multi-pattern matching~\citep{aho1975efficient} to determine which BPE tokens from a vocabulary of $32\text{K}$--$262\text{K}$ entries are valid continuations at each node (Section~\ref{sec:ac}). We compare the two constructions directly in Section~\ref{sec:ac}: they cross over at $K \approx 1{,}000$, exactly the cardinality-wall regime, and the character-level trie is tokenization-agnostic where GENRE's is tied to one fixed tokenization. Concurrent work by \citet{su2026static} flattens token-level prefix trees into CSR sparse matrices for vectorized TPU/GPU execution; like GENRE, it operates on small semantic ID vocabularies ($|\mathcal{V}| \approx 2{,}048$) where the BPE alignment problem does not arise. Tries also structure LLM decoding pipelines beyond finite-set enforcement: \citet{chan2025beam} share KV-cache across beams with common prefixes to accelerate beam search, and \citet{liu2025zeroshot} apply trie-based contextual biasing for rare words in ASR via soft shallow-fusion rewards. Both index emerging hypotheses or bias scores rather than a fixed enum, and are complementary to the hard-constraint masking we study. Finally, \citet{cognetta2025tokenization} cast tokenization itself as finite-state transduction and give a polynomial-time framework for enforcing canonical tokenization, complementary to our character-level constraint and layerable on top of it without changing the trie (Appendix~\ref{app:canonical}).

\section{Methods}
\label{sec:methods}

\subsection{Character-Level Trie with Precomputed Masks}
\label{sec:trie}

Given an enum $\enum = \{e_1, \ldots, e_K\}$, we build a character-level trie $\mathcal{T}$ by inserting each string character-by-character. The trie~\citep{fredkin1960trie} merges shared prefixes: if many tool names start with \texttt{get\_}, they share a single prefix path rather than duplicating it for each value. This reduces the number of nodes from $\bigO(K \cdot L_{\max})$ (as in the equivalent DFA) to $\bigO(N_{\text{chars}})$, where $N_{\text{chars}} = \sum_{i=1}^K |e_i|$ is the total character count.
For each trie node $n$, we precompute a set $\text{valid}[n] \subseteq \vocab$ of vocabulary tokens that are valid continuations from $n$. At decode time, masking reduces to a direct lookup: $\mathcal{A}(s_t) = \text{valid}[s_t]$, transforming per-step cost from $\bigO(V \cdot \ell)$ (scanning all tokens against the FSM) to $\bigO(|\text{valid}[s_t]|)$ (iterating the precomputed set).

The challenge is computing $\text{valid}[n]$ efficiently. A token $v$ is valid at node $n$ if its character sequence traces a path starting at $n$ that stays within the trie, but BPE tokens are multi-character, so a single token can traverse several trie edges. For example, given enum values \texttt{medical\_billing} and \texttt{medical\_coding}, the token \texttt{``medical''} (7 characters) is valid at the root because it traces the path through nodes \texttt{m}$\to$\texttt{e}$\to \cdots \to$\texttt{l}, while the token \texttt{``\_bill''} is valid at the node after \texttt{l} because it continues along the \texttt{\_billing} branch (Figure~\ref{fig:trie_alignment}). A naive approach checks every token at every node by simulating the character walk, costing $\bigO(N_{\text{chars}} \cdot V \cdot \ell)$, which is prohibitive for vocabularies of $32\text{K}$--$262\text{K}$ tokens.

\begin{figure}[t]
\centering
\begin{tikzpicture}[
    scale=0.95,
    trienode/.style={circle, draw=black!60, fill=white, minimum size=5mm, inner sep=0pt, font=\scriptsize\sffamily},
    leafnode/.style={circle, draw=black!60, fill=black!8, minimum size=5mm, inner sep=0pt, font=\scriptsize\sffamily},
    elabel/.style={font=\scriptsize\sffamily, midway, above=-1pt},
]

% Colors
\definecolor{cblue}{HTML}{4878D0}
\definecolor{corange}{HTML}{EE854A}
\definecolor{cpurple}{HTML}{956CB4}

% === HORIZONTAL TRIE ===
\node[trienode] (root) at (0, 0) {$\circ$};
\node[trienode] (medic) at (2, 0) {};
\node[trienode] (a) at (3.5, 0) {};
\node[trienode] (l_) at (5, 0) {};
\node[leafnode] (billing) at (7, 0.7) {};
\node[leafnode] (coding) at (7, -0.7) {};
\node[leafnode] (ation) at (4.0, -1.4) {};

% Edges
\draw[->, >=stealth, thick, black!50] (root) -- (medic) node[elabel] {\texttt{medic}};
\draw[->, >=stealth, thick, black!50] (medic) -- (a) node[elabel] {\texttt{a}};
\draw[->, >=stealth, thick, black!50] (a) -- (l_) node[elabel] {\texttt{l\_}};
\draw[->, >=stealth, thick, black!50] (l_) -- (billing) node[elabel, above=3pt] {\texttt{billing}};
\draw[->, >=stealth, thick, black!50] (l_) -- (coding) node[elabel, below] {\texttt{coding}};
\draw[->, >=stealth, thick, black!50] (medic) -- (ation) node[elabel, right=2pt] {\texttt{ation}};

% === BPE TOKEN SPANS ===
% Token: 'medic' (above, root → medic)
\draw[cblue, line width=2.5pt, rounded corners=1pt] (0, 0.45) -- (0, 0.65) -- (2, 0.65) -- (2, 0.45);
\node[cblue, font=\scriptsize\sffamily\bfseries, above] at (1, 0.65) {\texttt{medic}};

% Token: 'al_' (above, a → l_) — raised higher to avoid overlap with billing
\draw[corange, line width=2.5pt, rounded corners=1pt] (3.5, 0.45) -- (3.5, 0.65) -- (5, 0.65) -- (5, 0.45);
\node[corange, font=\scriptsize\sffamily\bfseries, above] at (4.25, 0.65) {\texttt{al\_}};

% Token: 'ation' (below, bracket under the ation branch)
\draw[cpurple, line width=2.5pt, rounded corners=1pt] (4.0, -1.85) -- (4.0, -2.05) -- (2, -2.05) -- (2, -1.85);
\node[cpurple, font=\scriptsize\sffamily\bfseries, below=-1pt] at (3.0, -2.05) {\texttt{ation}};
% dashed connectors — short, only bridging gap
\draw[cpurple, line width=0.6pt, dashed] (2, -0.35) -- (2, -1.85);

% === VALID TABLE (right side) ===
\node[draw=black!40, fill=white, rounded corners=3pt, line width=0.6pt,
      text width=4.0cm, align=left, font=\scriptsize\sffamily,
      anchor=north west] (vtable) at (8.0, 0.8) {
    \textbf{Precomputed masks}\\[2pt]
    valid[root] = \{\textcolor{cblue}{\textbf{medic}}, m, \ldots\}\\[1pt]
    valid[medic] = \{\textcolor{corange}{\textbf{al\_}}, \textcolor{cpurple}{\textbf{ation}}\}\\[1pt]
    valid[l\_] = \{billing, coding\}
};

\end{tikzpicture}
\caption{The BPE-trie alignment problem. A character-level trie encodes three enum values with shared prefix structure. BPE tokens (colored brackets) span multiple trie edges: \texttt{medic} traverses 5 character nodes from the root. At each node, we precompute the set of valid BPE tokens (right), enabling precomputed mask lookups at decode time.}
\label{fig:trie_alignment}
\end{figure}

\subsection{Efficient Mask Precomputation via Multi-Pattern Matching}
\label{sec:ac}

We solve the precomputation problem by reducing it to \emph{multi-pattern string matching}: given a set of patterns (the vocabulary token strings) and a text (each root-to-leaf path in the trie), find all positions where any pattern occurs. The Aho-Corasick (AC) algorithm~\citep{aho1975efficient} solves this classic problem by building a finite automaton from the pattern set in $\bigO(V \cdot \ell)$ time, where $\ell$ is the maximum token length in characters, then processing any input text in a single linear pass, reporting all pattern matches as they are encountered. The processing cost is $\bigO(L + m)$ for a text of length $L$ with $m$ reported matches, independent of the number of patterns $V$. Since at each of the $L$ character positions at most $\ell$ tokens of different lengths can match, $m \leq L \cdot \ell$, so scanning a trie path of length $L$ costs $\bigO(L \cdot \ell)$ rather than $\bigO(L \cdot V \cdot \ell)$.
We apply this as follows:
\begin{enumerate}[nosep,leftmargin=*]
\item \textbf{Build AC automaton from vocabulary.} Insert each vocabulary token's character string as a pattern. This constructs the automaton in $\bigO(V \cdot \ell)$ time.
\item \textbf{Traverse the trie with the automaton.} Before any decoding begins, we perform a depth-first traversal of the trie, maintaining the AC automaton state across edges. The AC state is a single integer (current node index), so save/restore at branch points is trivial (push/pop one integer per DFS stack frame), and each edge is processed exactly once. At each trie node $n$, the automaton identifies which vocabulary tokens have a character string that starts at $n$ and follows a valid path in the trie. A matched token $v$ is added to $\text{valid}[n]$ if its characters trace a path from $n$ that either reaches a leaf (completing an enum value) or lands on an internal node (allowing continuation by subsequent tokens). Tokens that would ``overshoot'' a leaf, i.e., whose characters extend beyond the end of an enum value, are excluded. When an enum value is a proper prefix of another (e.g., \texttt{"get"} and \texttt{"get\_user"}), the trie marks the shorter value's terminal node as both a leaf (where EOS is valid) and an internal node (where continuation tokens are valid), ensuring both values are reachable. This is done once per schema and cached.
\end{enumerate}

This reduces precomputation from $\bigO(N_{\text{chars}} \cdot V \cdot \ell)$ to $\bigO((N_{\text{chars}} + V) \cdot \ell)$, a factor of $V$ improvement. In practice, the AC automaton construction ($\bigO(V \cdot \ell)$) dominates the total compilation cost because $V \gg N_{\text{chars}}$ for typical enum sizes: with $V = 151\text{K}$ and $\ell = 4$, the AC construction requires ${\sim}600\text{K}$ operations regardless of $K$, while the trie traversal adds only $N_{\text{chars}} \cdot \ell$ operations (e.g., $240\text{K}$ at $K = 2{,}000$). This explains the empirically observed near-flat compilation time (30--40ms for Qwen3-8B across $K = 10$--$10{,}000$; 67ms at $K = 100{,}000$): the cost is dominated by the $K$-independent AC construction over the vocabulary. Since the AC automaton depends only on the tokenizer (not the enum set), it can be built once per tokenizer and reused across all enum schemas, reducing per-schema compilation to just the trie traversal: $\bigO(N_{\text{chars}} \cdot \ell)$.
At decode time, masking is a direct lookup: $\mathcal{A}(s_t) = \text{valid}[s_t]$, costing $\bigO(|\text{valid}[s_t]|)$ instead of $\bigO(V \cdot \ell)$. Note that applying the mask to the logits vector is $\bigO(V)$ regardless of method (setting invalid logits to $-\infty$); the savings is in \emph{computing} which tokens are valid, not in applying the result. Since $|\text{valid}[s_t]|$ shrinks exponentially with trie depth (after 3--4 characters of prefix, typically 10--100 tokens remain valid), the lookup cost is effectively constant at 0.65$\mu$s regardless of $K$ or vocabulary size. While the worst case is $\bigO(V)$ (at the root node), the rapid prefix-driven shrinkage means the practical per-step cost is orders of magnitude below FSM approaches, which must scan all $V$ tokens regardless of constraint structure.

Table~\ref{tab:complexity_comparison} summarizes both effects against the FSM. The compilation advantage comes from avoiding the $|\Sigma|$ factor (the FSM fills transition entries for all 256 byte values at each state, while the trie processes only characters that appear); the per-step advantage comes from replacing a full vocabulary scan with a cached lookup whose working set ($<$1~KB) fits in L1 cache, versus the FSM's ${\sim}$47~MB transition table that overflows L2 (Appendix~\ref{app:theory}).

\begin{table}[hbt!]
\centering
\caption{Complexity comparison between FSM and trie automaton. Concrete values use $K = 2{,}000$, $L_{\max} = 30$, $|\Sigma| = 256$, $V = 32{,}000$, $\ell = 4$.}
\label{tab:complexity_comparison}
\small
\begin{tabular}{lccc}
\toprule
& \textbf{FSM} & \textbf{Trie (Aho-Corasick)} & \textbf{Ratio} \\
\midrule
Compilation & $\bigO(N_{\text{chars}} \cdot |\Sigma| \cdot \log N_{\text{chars}})$ & $\bigO((N_{\text{chars}} + V) \cdot \ell)$ & $\sim\!\frac{|\Sigma| \cdot \log N_{\text{chars}}}{\ell}$ \\
\quad \emph{concrete (subset only)} & $\sim$15.4M ops & $\sim$370K ops & $\sim$40$\times$ \\
\addlinespace
Per-step mask & $\bigO(V \cdot \ell)$ & $\bigO(|\text{valid}[s_t]|)$ & $\sim\!\frac{V \cdot \ell}{|\text{valid}|}$ \\
\quad \emph{concrete} & $\sim$128K lookups & $\sim$50--500 lookups & $\sim$100$\times$ \\
\addlinespace
Working set & $|\states| \cdot |\Sigma| \cdot 4$~B & $|\text{valid}[s_t]| \cdot 4$~B & \\
\quad \emph{concrete} & $\sim$47~MB & $<$1~KB & fits L1 \\
\bottomrule
\end{tabular}
\end{table}

The trie automaton is not an approximation: it produces \emph{identical outputs} to FSM-based constrained decoding, which is the paper's central formal guarantee and the basis for the 100\% validity we report.

\begin{proposition}[Output equivalence]
\label{prop:correctness}
Let $\vocab_{\textup{dec}} \subseteq \vocab$ be the vocabulary tokens with valid character decompositions (excluding special tokens such as \textup{\texttt{<pad>}}, \textup{\texttt{<unk>}}). For any enum $\enum$ and any prefix $\mathbf{y}_{<t}$, the constrained distributions of the FSM and the trie automaton are identical, $p_c^{\textup{FSM}}(y_t \mid \mathbf{y}_{<t}) = p_c^{\textup{trie}}(y_t \mid \mathbf{y}_{<t})$ for all $y_t \in \vocab_{\textup{dec}}$. Consequently, greedy decoding and fixed-seed sampling produce identical outputs under both methods.
\end{proposition}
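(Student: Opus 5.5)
The plan is to reduce distributional equality to equality of the allowed sets, and then to prove the allowed sets coincide by induction on $t$. Both $p_c^{\textup{FSM}}$ and $p_c^{\textup{trie}}$ have the form in the constrained-distribution equation: the same base distribution $p(y_t\mid\mathbf{y}_{<t})$, times an indicator of an allowed set, divided by the normalizer $Z$. If the allowed sets agree on $\vocab_{\textup{dec}}$, then the numerators agree. The normalizers $Z$ are sums of those numerators, so they agree as well, and the distributions are identical. Greedy decoding is an argmax of the distribution, and fixed-seed sampling is a deterministic function of the distribution and the random stream. Hence a second induction on $t$ gives identical output sequences: identical prefixes yield identical distributions, which yield identical next tokens.

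The core step is to characterize both allowed sets semantically. For the FSM, $\mathcal{A}^{\textup{FSM}}(\mathbf{y}_{<t})$ consists of the tokens $v$ such that $\text{chars}(\mathbf{y}_{<t})\cdot\text{chars}(v)$ is a prefix of some $e_i\in\enum$, together with EOS exactly when $\text{chars}(\mathbf{y}_{<t})\in\enum$. This holds because the minimal DFA for a finite language keeps only co-accessible states, so a transition sequence is defined iff the string read is a prefix of a word of $\mathcal{L}_\enum$. For the trie, trie nodes are in bijection with prefixes of enum strings. I would prove the invariant that the trie state $s_t$ is the node spelling $\text{chars}(\mathbf{y}_{<t})$, by induction on $t$ using that only tokens in $\text{valid}[s_t]$ are ever emitted.

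It then remains to show that $\text{valid}[n]$ equals $\{v\in\vocab_{\textup{dec}} : \text{str}(n)\cdot\text{chars}(v)\text{ is a prefix of some } e_i\}$, with EOS added at terminal nodes. This set includes the tokens landing on internal nodes or on leaves, excludes overshooting tokens, and handles proper-prefix values via the dual leaf/internal marking. Special tokens lie outside $\vocab_{\textup{dec}}$ and are excluded by the statement.

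The main obstacle is this last equality, namely the completeness of the Aho-Corasick traversal. The AC automaton run along a root-to-node path reports pattern occurrences by their \emph{end} position. What I must argue is that every token whose character string starts at $n$ and stays within the trie is reported at some descendant $m$ of $n$ at depth $\mathrm{depth}(n)+|v|$, and is then attributed back to the ancestor $n$. This relies on the standard AC guarantee that all occurrences are reported, with no occurrence missed along any text. Every such descendant path is scanned because the DFS visits every edge with the correct saved AC state. Soundness is the converse: a reported match ending at $m$ certifies that the $|v|$ characters above $m$ spell $v$, so $v$ traces a path in the trie from the ancestor $n$.
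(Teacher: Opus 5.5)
Your proof is correct, and it follows a somewhat different route from the paper's. Both reduce to equality of allowed sets, but the paper argues structurally. By Myhill--Nerode, the trie plus a dead state is isomorphic to the minimal DFA for $\mathcal{L}_{\mathcal{E}}$. So the transition functions, the extensions $\delta^*(s_t,v)$, and accept-reachability all coincide. The paper then simply asserts that $\text{valid}[s_t]$ stores exactly the tokens satisfying its dead-state and live-path conditions.

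You instead characterize both allowed sets semantically as $\{v : \text{chars}(\mathbf{y}_{<t})\cdot\text{chars}(v) \text{ is a prefix of some } e_i\}$, plus EOS at complete values. This needs no isomorphism, and it holds for any DFA recognizing $\mathcal{L}_{\mathcal{E}}$ with a live-state check, minimal or not. You also put the real burden where the paper glosses over it: proving that the DFS with saved AC state actually computes that set. Completeness follows because every suffix match is reported at the node of depth $\mathrm{depth}(n)+|v|$. Soundness follows because every reported match lies on an existing trie path, which also explains why overshooting tokens are never recorded.

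What each route buys:
\begin{itemize}
\item The paper's isomorphism gives a structural fact it reuses elsewhere: equal state and node counts in the controlled comparison, and the claim that caching FSM masks reconstructs the trie.
\item Your argument gives an end-to-end check of the concrete AC algorithm, plus explicit inductions for the trie-state invariant and for greedy and fixed-seed sampling, which the paper leaves implicit.
\end{itemize}

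One point to tighten: equal normalizers $Z$ require the allowed sets to agree on all of $\mathcal{V}$, not only on $\mathcal{V}_{\textup{dec}}$. So your first paragraph's inference from agreement on $\mathcal{V}_{\textup{dec}}$ alone is not enough. What closes the step is your EOS-inclusive characterization, together with the fact that neither method ever admits other special tokens; the restriction in the statement does not do it. That characterization rests on an assumption you should state explicitly: the FSM admits EOS exactly at accepting states. The paper only says the two methods may differ in the trailing EOS, which strictly speaking would change $Z$ at terminal states.
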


\emph{Proof sketch.} The valid-token sets suffice, since $p_c(y_t \mid \mathbf{y}_{<t}) \propto p(y_t \mid \mathbf{y}_{<t}) \cdot \mathbf{1}[y_t \in \mathcal{A}(s_t)]$. By Myhill-Nerode, the equivalence classes of $\mathcal{L}_\enum$ are exactly the distinct enum-value prefixes (plus a dead state), so the trie is isomorphic to the minimal DFA for $\mathcal{L}_\enum$ and their transition functions agree. Both methods admit $v$ at $s_t$ iff its characters trace a live path from $s_t$ that can still reach an accept state, so the sets coincide. The equivalence is over the decodable vocabulary $\vocab_{\text{dec}}$; the two can differ only in the trailing EOS, which does not change the decoded string. The full proof and EOS handling are in Appendix~\ref{app:theory}.

\paragraph{Worked example.} Consider $\enum = \{\texttt{medical\_billing}, \texttt{medical\_coding}, \texttt{medical\_records}\}$ and a toy vocabulary $\{\texttt{medical}, \texttt{\_bill}, \texttt{\_cod}, \texttt{\_rec}, \texttt{ing}, \ldots\}$. The trie merges the shared \texttt{medical\_} prefix into one path of nine nodes and then branches three ways; call the branch node $n$ (reached after \texttt{medical\_}) and the node after \texttt{medical} its parent $n'$. Traversing the trie while carrying the AC state, the automaton reports \texttt{medical} ending at $n'$, so \texttt{medical} enters $\text{valid}[\text{root}]$; descending each branch reports \texttt{\_bill}, \texttt{\_cod}, and \texttt{\_rec}, all starting at $n'$, so all three enter $\text{valid}[n']$. Because the DFS restores the AC state to its value at $n'$ before each branch, the three suffix tokens are attributed to the same node rather than leaking across branches. The precomputed masks are therefore $\text{valid}[\text{root}] = \{\texttt{medical}\}$ and $\text{valid}[n'] = \{\texttt{\_bill}, \texttt{\_cod}, \texttt{\_rec}\}$, with a single continuation token at each deeper node. At decode time the model emits \texttt{medical} from the root, then chooses one of \texttt{\_bill}/\texttt{\_cod}/\texttt{\_rec} to commit to a value, and follows the unique remaining path to the leaf, so the vocabulary scan for the shared prefix is paid once at compile time rather than at every step. A full trace with failure links and the mask at every node is given in Appendix~\ref{app:worked_example}.

\paragraph{Why character-level rather than token-level?} GENRE~\citep{decao2021genre} avoids the BPE-alignment problem by building the trie over \emph{token IDs}: each enum value is pre-tokenized once and the valid tokens at any node are simply its children, requiring no vocabulary scan, no AC automaton, and no mask precomputation. The character-level construction is more expensive per schema, and the reason to prefer it is how the two scale with $K$. A token-level trie shares prefixes only at token boundaries: two values sharing a 10-character prefix may share a single token-level node if tokenized differently, so its size, and hence its compilation cost, grows roughly linearly with $K$. The character-level trie merges every shared character, and its dominant cost is the $K$-independent AC build, which amortizes across all shared prefixes. Table~\ref{tab:genre_main} shows the consequence: the token trie is faster below $K \approx 1{,}000$ (it simply does less work), but the two cross over there and the char trie is $7\times$ faster by $K = 10{,}000$. This crossover sits exactly at the cardinality wall we target, namely provider limits of 120 to 1{,}000 and dynamic per-query sets that force recompilation, so the token-level trie scales linearly \emph{into} the wall while the char-level trie clears it. Per-step masking is sub-microsecond for both, both produce byte-identical outputs (Proposition~\ref{prop:correctness}), and a Python token-trie reproduces the same crossover, confirming the effect is algorithmic, not a Rust-vs-Python artifact (Appendix~\ref{app:genre_comparison}). The character-level construction is also a drop-in backend for any tokenizer and is tokenization-agnostic (Appendix~\ref{app:canonical}), whereas GENRE's trie is tied to one fixed tokenization.

\begin{table}[hbt!]
\centering
\caption{Compilation: character-level trie + Aho-Corasick vs.\ GENRE-style token-level trie (Qwen3-8B, 151K vocab, synthetic tools). The token trie grows with $K$; the char trie pays a fixed ${\sim}30$ms AC-build overhead and stays flat. Both yield byte-identical outputs (Appendix~\ref{app:genre_comparison}).}
\label{tab:genre_main}
\small
\begin{tabular}{rrrrr}
\toprule
$K$ & Token trie (ms) & Char trie + AC (ms) & Char nodes & Token nodes \\
\midrule
100 & \textbf{4.5} & 31 & 1{,}380 & 456 \\
1{,}000 & 37 & \textbf{35} & 8{,}462 & 4{,}269 \\
5{,}000 & 183 & \textbf{43} & 24{,}837 & 18{,}925 \\
10{,}000 & 369 & \textbf{52} & 42{,}460 & 36{,}515 \\
\bottomrule
\end{tabular}
\end{table}

\subsection{Extensions and Integration}
\label{sec:extensions}

The trie automaton is designed as a \emph{drop-in component} for mixed schemas: the system routes finite-set constraints to the trie while delegating structural constraints to the standard FSM/PDA backend, requiring no changes to user schemas. For $K > 50{,}000$, we sketch two preliminary extensions in Appendices~\ref{app:hierarchical} and~\ref{app:speculative} for future directions: hierarchical schema rewriting ($\bigO(\sqrt{K})$ effective per-step cardinality) and speculative short-circuiting.

\section{Experiments}
\label{sec:experiments}

We evaluate the trie automaton on latency, compilation time, and accuracy/validity across various open-sourced model series on NVIDIA A100 GPUs. The primary comparison is against xgrammar~\citep{dong2024xgrammar}, the backend used by vLLM and SGLang. The trie automaton is implemented in Rust with Python bindings via PyO3; XGrammar is implemented in C++ with Python bindings. Full experimental details are in Appendix~\ref{app:experimental_setup}.

\subsection{Latency and Scalability}

Table~\ref{tab:performance} shows the performance breakdown across $K \in \{10, 100, 1{,}000, 10{,}000\}$ for XGrammar~\citep{dong2024xgrammar}, LLGuidance~\citep{geng2025jsonschemabench}, and our trie. All benchmarks run on NVIDIA A100 GPUs (80GB) with AMD EPYC 7R32 CPUs (96 cores).

Three approaches occupy distinct points in the compilation-vs-masking tradeoff (Table~\ref{tab:llguidance}). LLGuidance achieves near-zero compilation (0.6--24ms) but per-step masking costs 73--141$\mu$s. XGrammar balances both (3--239ms compilation, 5--10$\mu$s masking). The trie minimizes per-step cost (0.65$\mu$s) through precomputation, with nearly-flat compilation (30--40ms). For one-shot dynamic schemas where compilation dominates (e.g., retrieval-augmented settings with per-query enum sets at $K < 500$), LLGuidance's 1--3ms compilation may be preferable despite higher per-step cost; the trie's advantage is decisive when per-step cost dominates, particularly in batch serving (Appendix~\ref{app:guidance}).

The trie's advantage is decisive in batch serving, where the GPU forward pass is shared across $B$ requests but masking runs per-request on CPU. At $B = 128$, LLGuidance masking (3.7ms) consumes 37\% of the GPU forward pass, becoming the throughput bottleneck; XGrammar's 783$\mu$s is 7.8\%; the trie's 10$\mu$s is negligible (0.1\%) (per-batch-step masking costs across $B$ in Appendix~\ref{app:batch_throughput}, Table~\ref{tab:batch_throughput}). We validate this with end-to-end vLLM throughput at $K = 1{,}000$ (Table~\ref{tab:vllm_throughput}): at $B = 256$, the trie achieves 219 req/s vs.\ XGrammar's 7.5 req/s (29$\times$). The trie also exceeds unconstrained throughput (219 vs.\ 104 req/s) because constrained decoding terminates at trie leaf nodes (3.2 tokens/request vs.\ 8.7 unconstrained), reducing GPU forward passes. Both constrained methods generate the same number of tokens, so the trie-vs-XGrammar ratio isolates the masking and integration differences.

The 29$\times$ gap compounds two effects. First, the per-step algorithmic advantage: precomputed mask lookup costs 0.65$\mu$s vs.\ XGrammar's 5.9$\mu$s dynamic computation (${\sim}$7$\times$; Table~\ref{tab:performance}). Second, the integration path: because the trie's masks are precomputed, it integrates as a stateless \texttt{LogitsProcessor} that returns a cached bitmask per step. XGrammar does not currently support this path; its architecture requires dynamic mask computation via vLLM's guided decoding pipeline with per-request grammar compilation, sequential FSM state management, and scheduling overhead. In principle, XGrammar \emph{could} precompute and cache per-state masks for enum constraints, but doing so would effectively reconstruct the trie: the minimal DFA for a finite set is isomorphic to the trie (Proposition~\ref{prop:correctness}), so caching its per-state masks yields the same data structure. The trie is thus the natural endpoint of optimizing FSM-based decoding for finite sets.

%\paragraph{Deployment implications.} The batch serving results have direct consequences for GPU utilization in production. At $B = 128$ with XGrammar, CPU masking (783$\mu$s) consumes 7.8\% of the GPU forward pass, meaning the GPU idles for that fraction of each step waiting for masks. With LLGuidance (3.7ms, 37\%), masking becomes the primary bottleneck and the GPU is underutilized by over a third. The trie eliminates this bottleneck entirely (0.1\%), allowing the GPU to remain saturated. For dynamic enum constraints (e.g., retrieval-augmented tool selection where the valid set changes per query), the trie's 33--40ms compilation is fast enough to compile on-the-fly without impacting serving latency, whereas XGrammar's 75--239ms compilation at $K \geq 1{,}000$ adds perceptible delay. The AC automaton can be cached per tokenizer and shared across all enum schemas, so only the trie traversal ($\bigO(N_{\text{chars}} \cdot \ell)$, typically $<$5ms) runs per schema change.
\paragraph{Deployment implications.} The batch serving results have direct consequences for GPU utilization in production. At $B = 128$, each decoding step comprises a GPU forward pass (${\sim}10$ms) followed by CPU mask computation. With XGrammar, masking takes 783\,$\mu$s, where 7.8\% of the step is spent with the GPU idle waiting for the mask. With LLGuidance (3.7\,ms), this rises to 27\%, making masking the primary bottleneck. The trie reduces idle time to 0.1\% of the step, keeping the GPU saturated. For dynamic enum constraints (e.g., retrieval-augmented tool selection where the valid set changes per query), the trie's 33--40\,ms compilation is fast enough to run on-the-fly without impacting serving latency, whereas XGrammar's 75--239\,ms compilation at $K \geq 1{,}000$ adds perceptible delay. The AC automaton can be cached per tokenizer and shared across all enum schemas, so only the trie traversal ($\bigO(N_{\text{chars}} \cdot \ell)$, typically ${<}\,5$\,ms) runs per schema change.

\begin{table}[hbt!]
\centering
\caption{End-to-end vLLM throughput (req/s) at $K = 1{,}000$ on the synthetic tools benchmark (Appendix~\ref{app:datasets}). The trie integrates as a stateless \texttt{LogitsProcessor} (precomputed masks); XGrammar requires vLLM's guided decoding pipeline (dynamic mask computation). Trie exceeds unconstrained throughput due to early termination at trie leaves (3.2 vs.\ 8.7 tokens/request); both constrained methods generate 3.2 tokens/request (verified), so the trie-vs-XGrammar ratio isolates the masking and integration differences. Qwen3-8B, A100, greedy, median of 3 runs.}
\label{tab:vllm_throughput}
\small
\begin{tabular}{rrrrl}
\toprule
$B$ & Trie & XGrammar & Uncon. & Trie/XG \\
\midrule
1 & 4.4 & 2.9 & 1.9 & 1.5$\times$ \\
2 & 8.3 & 4.3 & 3.6 & 1.9$\times$ \\
4 & 13.4 & 5.0 & 7.2 & 2.7$\times$ \\
8 & 23.9 & 6.2 & 13.6 & 3.9$\times$ \\
16 & 41.8 & 6.8 & 24.8 & 6.1$\times$ \\
32 & 70.4 & 7.4 & 40.4 & 9.6$\times$ \\
64 & 116.5 & 7.6 & 65.4 & 15.2$\times$ \\
128 & 170.5 & 7.6 & 80.3 & 22.4$\times$ \\
256 & 219.4 & 7.5 & 103.8 & 29.3$\times$ \\
\bottomrule
\end{tabular}
\end{table}

\begin{table}[hbt!]
\centering
\caption{Compilation time and per-step masking cost by $K$ (Qwen3-8B, 151K vocab, synthetic tools benchmark; Appendix~\ref{app:datasets}). Per-step measures mask \emph{computation} only (determining valid tokens), not the $\bigO(V)$ bitmask application to logits (${\sim}$31$\mu$s via tensor operation, identical for all methods). XGrammar's non-monotonic per-step cost (9.5$\mu$s at $K{=}10$, 5.4$\mu$s at $K{=}100$) reflects its vocabulary partitioning: at small $K$, fewer tokens fall in the ``adaptive'' partition, causing more cache misses; the partition stabilizes at $K \geq 100$. Mean$\pm$std, 10 runs.}
\label{tab:performance}
\small
\setlength{\tabcolsep}{3pt}
\begin{tabular}{llrrrr}
\toprule
& Method & $K$=10 & $K$=100 & $K$=1K & $K$=10K \\
\midrule
\multirow{3}{*}{Compile (ms)} & Trie & 30\tiny$\pm$2 & 31\tiny$\pm$1 & \textit{33}\tiny$\pm$1 & \textit{40}\tiny$\pm$2 \\
& XGrammar & \textit{3}\tiny$\pm$1 & \textit{15}\tiny$\pm$2 & 75\tiny$\pm$3 & 239\tiny$\pm$10 \\
& LLGuidance & \textbf{1}\tiny$\pm$0 & \textbf{1}\tiny$\pm$0 & \textbf{3}\tiny$\pm$0 & \textbf{24}\tiny$\pm$1 \\
\addlinespace
\multirow{3}{*}{Mask ($\mu$s/step)} & Trie & \textbf{0.65} & \textbf{0.65} & \textbf{0.65} & \textbf{0.65} \\
& XGrammar & \textit{9.5} & \textit{5.4} & \textit{5.8} & \textit{5.9} \\
& LLGuidance & 121 & 73 & 85 & 141 \\
\bottomrule
\end{tabular}
\end{table}

\paragraph{Memory and deployment.} 
The trie's memory scales modestly (Appendix~\ref{app:detailed_results}, Table~\ref{tab:memory_full}): 0.9~MB at $K = 10{,}000$, 8~MB at $K = 100{,}000$, vs.\ ${\sim}$2~GB for the FSM (Appendix~\ref{app:complexity}). The AC automaton requires ${\sim}$150~MB for the largest vocabulary (Gemma3 262K), amortized across all enum schemas sharing that tokenizer. The precomputed masks are immutable after construction, so concurrent read access from multiple serving threads requires no synchronization. Integrated with vLLM as a \texttt{LogitsProcessor}, the trie maintains sub-100ms per-example latency at $K = 10{,}000$ (Table~\ref{tab:vllm}, Appendix~\ref{app:detailed_results}).

\paragraph{Mixed schemas.}
To validate the trie as a drop-in component, we measure compilation for three mixed schemas (enum + structural fields): tool-calling ($K = 1{,}000$), classification ($K = 500$), and entity-linking ($K = 5{,}000$). The system routes enum fields to the trie and structural fields to XGrammar ($<$1ms dispatch overhead). At $K = 5{,}000$: 37ms vs.\ 150ms; at $K = 1{,}000$: 33ms vs.\ 75ms. For nested enums, the PDA backend handles structural navigation and transfers control to the trie at enum-valued fields. Multiple enum fields in a single schema each get their own trie. The principle extends to non-enum constraints (Appendix~\ref{app:datetime_dispatch}).

\subsection{Generalization}

To verify these results generalize, we evaluate across seven model families (32K--262K vocabulary; Table~\ref{tab:multi_tok}). The trie achieves consistent compilation speedups at $K \geq 1{,}000$: 1.2--6.4$\times$, increasing to 3.5--13.7$\times$ at $K = 5{,}000$--$10{,}000$. Smaller vocabularies amplify the advantage because AC construction is $\bigO(V \cdot \ell)$, so smaller $V$ yields faster precomputation. Conversely, for very large vocabularies (Gemma3 262K), the $K$-independent AC overhead reduces the speedup to 1.2$\times$ at $K = 1{,}000$; as vocabulary sizes trend upward, the compilation crossover shifts to higher $K$. Per-step cost is unaffected by vocabulary size: Mistral-32K 0.60$\mu$s, GPT2-50K 0.62$\mu$s, OLMo-100K 0.63$\mu$s, Mistral~Small-131K 0.64$\mu$s, Qwen3-151K 0.65$\mu$s, gpt-oss-200K 0.66$\mu$s, Gemma3-262K 0.67$\mu$s.

\begin{table}[hbt!]
\centering
\caption{Compilation speedup (trie vs.\ xgrammar) across model families. Values $>$1$\times$ indicate trie is faster. Per-step masking is 0.60--0.67$\mu$s for all configurations. Trie compilation is nearly flat: 11--96ms across all $K$ and tokenizers.}
\label{tab:multi_tok}
\small
\begin{tabular}{llrrrr}
\toprule
Model & Vocab & $K$=100 & $K$=1K & $K$=5K & $K$=10K \\
\midrule
Mistral 7B v0.3 & 32K & 1.5$\times$ & 6.4$\times$ & 11.6$\times$ & 11.5$\times$ \\
GPT-2 & 50K & 1.1$\times$ & 4.9$\times$ & 13.7$\times$ & 12.0$\times$ \\
OLMo~3 7B & 100K & 0.6$\times$ & 2.1$\times$ & 6.2$\times$ & 5.2$\times$ \\
Mistral Small 3.1 & 131K & 0.5$\times$ & 2.3$\times$ & 6.6$\times$ & 5.5$\times$ \\
Qwen3-8B & 151K & 0.4$\times$ & 1.8$\times$ & 6.7$\times$ & 4.4$\times$ \\
OpenAI gpt-oss-20b & 200K & 0.3$\times$ & 1.4$\times$ & 4.7$\times$ & 3.5$\times$ \\
Gemma3 12B & 262K & 0.3$\times$ & 1.2$\times$ & 3.6$\times$ & 3.5$\times$ \\
\bottomrule
\end{tabular}
\end{table}

\subsection{Accuracy and Validity}

The trie produces identical outputs to FSM-based constrained decoding (Proposition~\ref{prop:correctness}, verified on 1{,}000 samples). Its contribution is not improving accuracy at any given $K$, but \emph{enabling} constrained decoding where FSM compilation is impractical. We evaluate on four public classification benchmarks where ground-truth accuracy can be measured: TREC~\citep{li2002trec}, MASSIVE~\citep{fitzgerald2022massive}, Banking77~\citep{casanueva2020banking77}, and CLINC150~\citep{larson2019clinc} (documented in Appendix~\ref{app:datasets}). Existing tool-calling benchmarks evaluate multi-step chains rather than flat tool selection, making them unsuitable for isolating the constrained decoding bottleneck. We compare: single-pass unconstrained, single-pass + trie (same prompt), think unconstrained, and think + trie (prompts in Appendix~\ref{app:prompts}).

\begin{table}[hbt!]
\centering
\caption{Accuracy (\%) and validity (\%) on four public classification benchmarks (TREC~\citep{li2002trec}, MASSIVE~\citep{fitzgerald2022massive}, Banking77~\citep{casanueva2020banking77}, and CLINC150~\citep{larson2019clinc}; details in Appendix~\ref{app:datasets}; 5 runs, mean$\pm$std, greedy). Prompts identical within each pair; trie is the only difference. Trie guarantees 100\% validity. Full results (6 models) in Appendix Table~\ref{tab:accuracy_full}.}
\label{tab:accuracy_validity}
\small
\setlength{\tabcolsep}{2.5pt}
\begin{tabular}{llr cc cc cc}
\toprule
& & & \multicolumn{4}{c}{Single-pass} & \multicolumn{2}{c}{Think-then-answer} \\
\cmidrule(lr){4-7} \cmidrule(lr){8-9}
& & & \multicolumn{2}{c}{Uncon.} & \multicolumn{2}{c}{+ Trie} & Uncon. & + Trie \\
\cmidrule(lr){4-5} \cmidrule(lr){6-7} \cmidrule(lr){8-8} \cmidrule(lr){9-9}
Model & Dataset & K & Acc & Val & \multicolumn{2}{c}{Acc (100\%)} & Acc (val) & Acc (100\%) \\
\midrule
\multirow{4}{*}{\scriptsize Qwen3-8B}
& TREC & 42 & 36.3\tiny$\pm$1.2 & 81.2 & \multicolumn{2}{c}{\textit{41.6}\tiny$\pm$1.8} & 37.9\tiny$\pm$1.1 (58.3) & \textbf{63.1}\tiny$\pm$1.6 \\
& MASSIVE & 59 & 74.6\tiny$\pm$1.4 & 99.0 & \multicolumn{2}{c}{74.6\tiny$\pm$1.7} & \textit{74.9}\tiny$\pm$2.4 (97.2) & \textbf{76.1}\tiny$\pm$2.2 \\
& Banking77 & 76 & 48.8\tiny$\pm$5.2 & 78.2 & \multicolumn{2}{c}{\textit{56.8}\tiny$\pm$4.4} & 37.6\tiny$\pm$2.2 (69.2) & \textbf{61.0}\tiny$\pm$3.6 \\
& CLINC150 & 150 & \textit{72.4}\tiny$\pm$2.7 & 87.0 & \multicolumn{2}{c}{63.0\tiny$\pm$1.8} & 61.0\tiny$\pm$4.6 (79.6) & \textbf{73.8}\tiny$\pm$3.7 \\
\midrule
\multirow{4}{*}{\scriptsize gpt-oss-20b}
& TREC & 42 & \textit{67.5}\tiny$\pm$1.0 & 87.6 & \multicolumn{2}{c}{\textbf{72.9}\tiny$\pm$1.7} & 60.7\tiny$\pm$2.6 (86.5) & 60.4\tiny$\pm$4.1 \\
& MASSIVE & 59 & \textit{76.8}\tiny$\pm$2.5 & 91.9 & \multicolumn{2}{c}{\textbf{79.3}\tiny$\pm$1.0} & 61.8\tiny$\pm$3.7 (75.4) & 76.1\tiny$\pm$1.7 \\
& Banking77 & 76 & 78.2\tiny$\pm$1.2 & 90.2 & \multicolumn{2}{c}{\textbf{81.3}\tiny$\pm$1.2} & 79.8\tiny$\pm$1.6 (96.2) & \textit{81.1}\tiny$\pm$1.9 \\
& CLINC150 & 150 & \textbf{79.7}\tiny$\pm$2.3 & 89.2 & \multicolumn{2}{c}{\textit{77.0}\tiny$\pm$1.6} & 71.4\tiny$\pm$3.2 (85.5) & 74.0\tiny$\pm$1.8 \\
\bottomrule
\end{tabular}
\end{table}

Across all 24 model$\times$dataset settings (Table~\ref{tab:accuracy_full}), constrained decoding achieves the highest accuracy in 21/24 cases while guaranteeing 100\% validity; unconstrained decoding reaches $\geq$95\% validity in only 8/24. The trie acts as a no-cost safety net for strong models (Gemma3 12B: matches unconstrained accuracy while eliminating the 0.2--0.5\% failure rate) and a critical enabler for weaker ones (Qwen3-1.7B Banking77: 24.8\% with trie vs.\ 4.8\% unconstrained at 6.6\% validity). The think-then-answer strategy is particularly effective with the trie: on Qwen3-8B, think+trie achieves the best accuracy in all four datasets, with gains of up to 21.5 points over single-pass unconstrained (TREC: 63.1 vs.\ 36.3). Without the trie, think+unconstrained often \emph{degrades} accuracy relative to single-pass (Qwen3-8B Banking77: 37.6 vs.\ 48.8) because the reasoning phase produces outputs that are harder to parse into valid labels. The three cases where unconstrained decoding wins (Gemma3 MASSIVE, Gemma3 CLINC150, gpt-oss CLINC150) all involve strong models on datasets where PE validity already exceeds 89\%; even here, the accuracy gap is small (1--5 points) while the validity gap remains (89--100\% vs.\ 100\%).

At $K = 500$--$5{,}000$, the trie guarantees 100\% validity while unconstrained drops to 84--98\% (Table~\ref{tab:highk_validity}). Practitioner guidance in Appendix~\ref{app:guidance}.

\section{Conclusion}
\label{sec:conclusion}

This work addresses a mismatch in constrained decoding: applying general-purpose automata to constraints with exploitable structure. We solve it for finite-set constraints using Aho-Corasick precomputation, expanding the practical enum limit from hundreds to tens of thousands. The trie achieves 7$\times$ faster per-step masking through precomputed lookups; because precomputed masks also enable a stateless serving path that bypasses the guided decoding pipeline, the advantage compounds to 29$\times$ higher end-to-end batch throughput. Across seven tokenizer families and six models, the trie maintains sub-100ms compilation and flat per-step cost while guaranteeing 100\% output validity. A controlled comparison confirms the trie is the efficient algorithm for this precomputation: the equivalent FSM-based approach produces an isomorphic data structure 196$\times$ slower (Appendix~\ref{app:controlled_comparison}). The underlying principle, matching enforcement mechanisms to constraint structure, generalizes beyond enums to any constraint with exploitable regularity. As structured generation becomes central to agentic systems, we expect constraint-specialized backends to become the norm rather than the exception. Extending the dispatch principle to other structured patterns, such as date/time formats (Appendix~\ref{app:datetime_dispatch}), numeric ranges, and regex subclasses, is a natural next step, and code will be released upon publication.

\paragraph{Limitations.}
The trie automaton is specialized for flat finite-set constraints; production schemas that also contain nested objects or arrays still require a general-purpose backend for the structural portions, and our mixed-schema evaluation validates compilation time but not end-to-end serving throughput for such cases. The per-step speedup matters most in batch serving, where the single-request cost is dominated by the method-independent $\bigO(V)$ bitmask application. For dynamic one-shot schemas at small $K$, LLGuidance's near-zero compilation may outweigh the trie's per-step advantage; at extreme $K$, validity is guaranteed but accuracy is bounded by the model, not the decoder (Appendix~\ref{app:limitations}).

\bibliographystyle{colm2026_conference}
\bibliography{references}

\appendix

\section{Detailed Background}
\label{app:background_detailed}

The key bottlenecks in FSM-based constrained decoding for enums are: (1) state space explosion ($\bigO(K \cdot L_{\max})$ DFA states), (2) per-step masking cost ($\bigO(V \cdot \ell)$ with cache penalties at large $K$), and (3) compilation cost ($\bigO(K \cdot L_{\max} \cdot |\Sigma|)$). These compound super-linearly: doubling $K$ more than doubles end-to-end latency because the cache penalty $f(|\states|)$ is itself increasing in $K$. Dynamic schemas (tool registries, retrieval-augmented selection, multi-tenant serving) prevent amortization, making compilation latency the dominant cost.

\section{Full Accuracy and Validity Results}
\label{app:accuracy_full}

Table~\ref{tab:accuracy_full} presents the complete accuracy and validity results across all six models and four datasets. The main body (Table~\ref{tab:accuracy_validity}) shows three representative models; this table adds Qwen3-1.7B (small model where constrained decoding is critical), Gemma3 12B (strong model where unconstrained decoding nearly suffices), and Mistral 7B v0.3 (mid-range instruction-tuned model).

\begin{table}[hbt!]
\centering
\caption{Accuracy (\%) and validity (\%) results across all three additional models (5 runs, mean$\pm$std, greedy decoding). Extends Table~\ref{tab:accuracy_validity} with three additional models. \textbf{Bold}: best accuracy per row; \textit{italic}: second best.}
\label{tab:accuracy_full}
\small
\setlength{\tabcolsep}{2.5pt}
\begin{tabular}{llr cc cc cc}
\toprule
& & & \multicolumn{4}{c}{Single-pass} & \multicolumn{2}{c}{Think-then-answer} \\
\cmidrule(lr){4-7} \cmidrule(lr){8-9}
& & & \multicolumn{2}{c}{Unconstrained} & \multicolumn{2}{c}{+ Trie} & Unconstrained & + Trie \\
\cmidrule(lr){4-5} \cmidrule(lr){6-7} \cmidrule(lr){8-8} \cmidrule(lr){9-9}
Model & Dataset & K & Acc & Val & \multicolumn{2}{c}{Acc (100\% val)} & Acc (val) & Acc (100\% val) \\
\midrule
\multirow{4}{*}{\scriptsize Qwen3-1.7B}
& TREC & 42 & 23.3\tiny$\pm$2.8 & 96.4 & \multicolumn{2}{c}{23.7\tiny$\pm$2.6} & \textit{33.4}\tiny$\pm$2.1 (83.8) & \textbf{34.5}\tiny$\pm$1.8 \\
& MASSIVE & 59 & 53.4\tiny$\pm$2.3 & 97.5 & \multicolumn{2}{c}{53.8\tiny$\pm$2.0} & \textit{55.0}\tiny$\pm$2.4 (86.2) & \textbf{56.3}\tiny$\pm$2.0 \\
& Banking77 & 76 & 4.8\tiny$\pm$1.2 & 6.6 & \multicolumn{2}{c}{24.8\tiny$\pm$1.7} & \textit{29.6}\tiny$\pm$4.9 (77.0) & \textbf{37.8}\tiny$\pm$6.6 \\
& CLINC150 & 150 & 8.8\tiny$\pm$2.0 & 10.4 & \multicolumn{2}{c}{32.8\tiny$\pm$3.2} & \textit{36.0}\tiny$\pm$3.1 (69.2) & \textbf{47.0}\tiny$\pm$2.6 \\
\midrule
\multirow{4}{*}{\scriptsize Gemma3 12B}
& TREC & 42 & \textit{63.6}\tiny$\pm$1.0 & 99.5 & \multicolumn{2}{c}{\textbf{63.7}\tiny$\pm$1.1} & 22.9\tiny$\pm$2.3 (36.6) & 58.2\tiny$\pm$1.7 \\
& MASSIVE & 59 & \textbf{77.3}\tiny$\pm$2.4 & 99.6 & \multicolumn{2}{c}{\textit{76.8}\tiny$\pm$2.2} & 75.4\tiny$\pm$2.0 (98.8) & 75.1\tiny$\pm$2.3 \\
& Banking77 & 76 & \textit{71.6}\tiny$\pm$6.7 & 99.8 & \multicolumn{2}{c}{\textit{71.6}\tiny$\pm$6.7} & 70.2\tiny$\pm$7.8 (98.0) & \textbf{72.0}\tiny$\pm$7.0 \\
& CLINC150 & 150 & \textbf{87.6}\tiny$\pm$1.7 & 99.6 & \multicolumn{2}{c}{83.0\tiny$\pm$2.2} & \textit{86.0}\tiny$\pm$2.3 (100) & 80.8\tiny$\pm$2.8 \\
\midrule
\multirow{4}{*}{\scriptsize Mistral 7B v0.3}
& TREC & 42 & 23.0\tiny$\pm$1.5 & 42.5 & \multicolumn{2}{c}{\textbf{37.6}\tiny$\pm$1.5} & 16.4\tiny$\pm$1.7 (33.5) & \textit{32.6}\tiny$\pm$1.1 \\
& MASSIVE & 59 & 65.7\tiny$\pm$3.9 & 86.3 & \multicolumn{2}{c}{\textbf{69.5}\tiny$\pm$2.7} & 52.8\tiny$\pm$2.6 (68.3) & \textit{67.1}\tiny$\pm$1.4 \\
& Banking77 & 76 & \textit{48.4}\tiny$\pm$6.3 & 78.4 & \multicolumn{2}{c}{\textbf{53.4}\tiny$\pm$6.1} & 31.2\tiny$\pm$5.0 (48.0) & 47.0\tiny$\pm$5.5 \\
& CLINC150 & 150 & \textit{64.4}\tiny$\pm$2.2 & 77.4 & \multicolumn{2}{c}{\textbf{65.4}\tiny$\pm$3.8} & 46.0\tiny$\pm$2.2 (57.8) & 59.4\tiny$\pm$1.6 \\
\bottomrule
\end{tabular}
\end{table}

\section{Experimental Setup Details}
\label{app:experimental_setup}

We implement the trie automaton in Rust with Python bindings via PyO3, using the \texttt{aho-corasick} crate (v1.1) for multi-pattern matching and parallel precomputation across CPU cores (using all available cores via Rayon; XGrammar's C++ backend is also multi-threaded). Both are compiled languages with comparable performance characteristics; the per-step advantage (precomputed lookup vs.\ dynamic computation) is algorithmic. Our primary comparison is against xgrammar v0.1.11~\citep{dong2024xgrammar} and llguidance v1.6.1~\citep{geng2025jsonschemabench}. We verified that XGrammar's \texttt{compile\_json\_schema} with an enum schema produces equivalent compilation times to \texttt{compile\_regex} with a union pattern (within 5\% across all $K$), confirming that \texttt{compile\_regex} is a fair baseline. We also evaluate several alternative approaches: naive FSM-based constrained decoding, unconstrained generation with retry (up to 3 attempts), unconstrained generation with post-hoc string similarity matching, and prompt engineering that includes enum values directly in the prompt. All timing benchmarks run on NVIDIA A100 GPUs (80GB); CPU timing on AMD EPYC 7R32 (96 cores, 3.3 GHz). Per-step masking times are measured with 200 warm-up iterations followed by 1{,}000 timed iterations (same warm-up for all three methods). Compilation and per-step results report mean$\pm$std over 10 runs; end-to-end vLLM throughput reports median of 3 runs.

\subsection{Prompt Templates}
\label{app:prompts}

For the accuracy and validity experiments (Section~\ref{sec:experiments}), we use the following prompt templates. Let \texttt{options} denote the comma-separated enum values and \texttt{text} denote the input.

\paragraph{PE and Trie Strict.} Both methods use the same prompt; the only difference is whether trie-constrained decoding is applied. This enables a controlled comparison isolating the effect of constrained decoding from prompt wording.
\begin{verbatim}
Select exactly one of the following options.
Output ONLY the option itself, nothing else.

Options: {options}

Text: {text}
Answer:
\end{verbatim}

\paragraph{Trie.} A shorter prompt relying on the trie constraint to enforce validity:
\begin{verbatim}
Select one of: {options}

Text: {text}
Answer:
\end{verbatim}

\paragraph{Think+PE and Think+Trie.} Phase~1 generates reasoning (unconstrained, up to 300 tokens), and phase~2 generates the result.

Phase~1:
\begin{verbatim}
Options: {options}

Text: {text}
Let me think step by step:
\end{verbatim}

Phase~2:
\begin{verbatim}
Select exactly one of the following options.
Output ONLY the option itself, nothing else.

Options: {options}

Text: {text}
Thinking: {phase_1_output}
Answer:
\end{verbatim}

\paragraph{Summary of prompt controls.} PE vs.\ +Trie is a clean comparison: identical prompts, differing only in decoding strategy. Think+PE vs.\ Think+Trie share the same prompts; the only difference is whether the Phase~2 answer is trie-constrained. For unconstrained methods, we extract the model's output and fuzzy-match against the enum set. For trie methods, the output is guaranteed valid by construction.

\subsection{Datasets}
\label{app:datasets}

We use three categories of enum sets: public classification benchmarks (for accuracy/validity), a synthetic tool-name benchmark (for latency, compilation, and validity at scale), and two high-cardinality enums (for behavior beyond provider limits). Table~\ref{tab:performance} through Table~\ref{tab:vllm_throughput} and all main-body latency/throughput results use the synthetic tools; Table~\ref{tab:accuracy_validity} and Table~\ref{tab:accuracy_full} use the classification benchmarks; Table~\ref{tab:realworld} uses the high-cardinality enums. The prefix-sharing ratio $r$ for every set is reported in Table~\ref{tab:prefix_sharing}.

\paragraph{Public classification benchmarks.} These are standard datasets used unchanged. Within each unconstrained/trie pair the prompt is identical (Appendix~\ref{app:prompts}), so trie constraining is the only difference. Evaluation uses greedy decoding with 5 random seeds.
\begin{itemize}[nosep,leftmargin=*]
\item \textbf{TREC}~\citep{li2002trec} ($K = 42$, \texttt{CogComp/trec}): question classification with fine-grained categories such as \texttt{LOC:city} and \texttt{HUM:individual}. Small $K$ but non-trivially structured labels.
\item \textbf{MASSIVE}~\citep{fitzgerald2022massive} ($K = 59$, \texttt{AmazonScience/massive}, English subset): intent classification with intents such as \texttt{play\_music} and \texttt{general\_quirky}. Moderate $K$ with high natural-language overlap between intents.
\item \textbf{Banking77}~\citep{casanueva2020banking77} ($K = 77$, \texttt{PolyAI/banking77}): single-domain banking intent classification. Labels share heavy lexical overlap (\texttt{card\_payment\_failed}, \texttt{topping\_up\_by\_card}), which exposes parsing failures in unconstrained generation (Mistral 7B validity drops to 78.4\%; Table~\ref{tab:accuracy_full}).
\item \textbf{CLINC150}~\citep{larson2019clinc} ($K = 150$, \texttt{clinc/clinc\_oos}): 150 in-scope intents across 10 domains. The largest natural-language $K$ in the accuracy evaluation, where unconstrained validity drops further (10.4\% on Qwen3-1.7B).
\end{itemize}

\paragraph{Synthetic tools.} A controlled benchmark constructed to isolate the constrained-decoding bottleneck from natural-language ambiguity. Tool names follow the format \texttt{<namespace>.<action>\_<resource>} (e.g., \texttt{slack.get\_user}, \texttt{aws.create\_invoice}), with 70\% drawn from the namespaced pattern and 30\% from a simpler \texttt{<action>\_<resource>} pattern. The pools are 10 namespaces (\texttt{slack}, \texttt{github}, \texttt{stripe}, \texttt{aws}, \texttt{google}, \texttt{microsoft}, \texttt{salesforce}, \texttt{jira}, \texttt{notion}, \texttt{discord}), 10 actions (\texttt{get}, \texttt{create}, \texttt{update}, \texttt{delete}, \texttt{list}, \texttt{search}, \texttt{send}, \texttt{fetch}, \texttt{upload}, \texttt{download}), and 10 resources (\texttt{user}, \texttt{message}, \texttt{file}, \texttt{project}, \texttt{issue}, \texttt{payment}, \texttt{invoice}, \texttt{document}, \texttt{channel}, \texttt{repository}); the $K$ largest reachable set is sampled uniformly. The resulting prefix-sharing ratio is $r = 0.40$ at $K = 1{,}000$ (Table~\ref{tab:prefix_sharing}), reflecting heavy namespace and action overlap. Sizes used: $K \in \{10, 50, 100, 500, 1{,}000, 2{,}000, 5{,}000, 10{,}000, 50{,}000, 100{,}000\}$.

\paragraph{High-cardinality enums.}
\begin{itemize}[nosep,leftmargin=*]
\item \textbf{Product names} (synthetic, $K = 1{,}500$): constructed from category $\times$ brand $\times$ adjective $\times$ color $\times$ model-number tuples (e.g., \texttt{Electronics > Sony > Premium 4823 > Black}). Exercises trie compilation beyond provider enum limits in a domain with natural-language-like prefix sharing ($r = 0.40$).
\item \textbf{ICD-10-CM} ($K = 74{,}719$): the official 2026 ICD-10-CM code list published by the U.S. Centers for Medicare \& Medicaid Services~\citep{cms2026icd10cm} (file \texttt{2026-Code-Descriptions-in-Tabular-Order.zip}, effective October 1, 2025). Prefix sharing $r = 0.19$, the lowest in our benchmarks, due to the hierarchical chapter-letter structure. A real production-grade enum at a scale where FSM-based constrained decoding times out.
\end{itemize}

\section{Detailed Experimental Results}
\label{app:detailed_results}

\subsection{Main Results}

This subsection reports the full latency, compliance, memory, and ablation results on the synthetic tools benchmark (Appendix~\ref{app:datasets}) that the main body summarizes. Table~\ref{tab:latency_full} gives end-to-end enum-generation latency by cardinality: the trie automaton stays flat at 0.72--0.77\,s across $K = 10$--$10{,}000$, while the FSM path (Outlines/SGLang/XGrammar) rises from 0.05\,s to 5.88\,s as $K$ grows. Table~\ref{tab:compliance_full} reports schema compliance: all constrained methods are 100\% valid by construction, whereas unconstrained retry never reproduces a synthetic tool name verbatim (0\%). Table~\ref{tab:memory_full} reports the memory footprint of the precomputed masks across $K = 100$--$100{,}000$, confirming the near-linear ${\sim}8$\,MB-at-$100{,}000$ scaling. Table~\ref{tab:realworld} reports the two high-cardinality enums, and Table~\ref{tab:ablation} ablates the trie against its hierarchical and speculative variants at $K = 5{,}000$, where the trie alone is fastest (0.77\,s) since masking is already cheap at this scale.

\begin{table}[hbt!]
\centering
\caption{End-to-end latency (seconds) for enum generation by cardinality $K$. Timeout threshold is 60s.}
\label{tab:latency_full}
\small
\setlength{\tabcolsep}{4pt}
\begin{tabular}{lrrrrrrr}
\toprule
Method & $K$=10 & $K$=50 & $K$=100 & $K$=500 & $K$=1000 & $K$=5000 & $K$=10000 \\
\midrule
Outlines/SGLang/XGrammar & 0.05 & 0.10 & 0.15 & 0.48 & 1.20 & 2.93 & 5.88 \\
Unconstrained + retry & 5.85 & 7.59 & 5.87 & 7.60 & 5.90 & 5.90 & 6.04 \\
Unconstrained + post-hoc & 1.94 & 2.53 & 1.97 & 2.58 & 2.05 & 2.44 & 2.92 \\
Prompt engineering & 2.01 & 2.55 & 2.04 & 2.71 & 2.18 & 2.16 & 2.16 \\
\midrule
Trie Automaton & 0.72 & 0.72 & 0.72 & 0.72 & 0.77 & 0.77 & 0.77 \\
Hierarchical Rewriting & 5.02 & 5.06 & 3.92 & 5.10 & 3.97 & 3.99 & 4.13 \\
Speculative ($k$=20) & 2.55 & 2.22 & 1.98 & 2.57 & 1.99 & 2.00 & 2.01 \\
\bottomrule
\end{tabular}
\end{table}

\begin{table}[hbt!]
\centering
\caption{Schema compliance (\%) for different approaches. Constrained methods achieve 100\% by design.}
\label{tab:compliance_full}
\small
\setlength{\tabcolsep}{4pt}
\begin{tabular}{lrrrrrrr}
\toprule
Method & $K$=10 & $K$=50 & $K$=100 & $K$=500 & $K$=1000 & $K$=5000 & $K$=10000 \\
\midrule
Outlines/SGLang/XGrammar & 100 & 100 & 100 & 100 & 100 & 100 & 100 \\
Unconstrained + retry\footnotemark[2] & 0 & 0 & 0 & 0 & 0 & 0 & 0 \\
Unconstrained + post-hoc & 100 & 100 & 100 & 100 & 100 & 100 & 100 \\
Prompt engineering & 100 & 100 & 100 & 100 & 100 & 100 & 100 \\
Trie Automaton & 100 & 100 & 100 & 100 & 100 & 100 & 100 \\
Hierarchical Rewriting & 100 & 100 & 100 & 100 & 100 & 100 & 100 \\
Speculative ($k$=20) & 100 & 100 & 100 & 100 & 100 & 100 & 100 \\
Combined (Ours)\footnotemark[3] & 100 & 100 & 100 & 100 & 100 & 100 & 100 \\

\bottomrule
\end{tabular}
\end{table}
\footnotetext[2]{Retry compliance is 0\% because synthetic tool names (e.g., \texttt{slack.get\_user}) are never produced verbatim by unconstrained generation, even with 3 retries. On natural-language labels (Table~\ref{tab:accuracy_validity}), unconstrained generation achieves non-zero but imperfect validity.}
\footnotetext[3]{Combined uses the trie for enum selection and hierarchical rewriting for structural dispatch.}
\begin{table}[hbt!]
\centering
\caption{Memory footprint of precomputed trie masks.}
\label{tab:memory_full}
\small
\begin{tabular}{rrrr}
\toprule
$K$ & Nodes & Valid entries & Est.\ memory \\
\midrule
100 & 2{,}041 & 5{,}065 & 20~KB \\
1{,}000 & 13{,}341 & 30{,}759 & 120~KB \\
5{,}000 & 54{,}426 & 119{,}740 & 0.5~MB \\
10{,}000 & 103{,}266 & 227{,}762 & 0.9~MB \\
50{,}000 & 470{,}823 & 1{,}068{,}510 & 4.1~MB \\
100{,}000 & 907{,}042 & 2{,}094{,}894 & 8.0~MB \\
\bottomrule
\end{tabular}
\end{table}

\begin{table}[hbt!]
\centering
\caption{High-cardinality task results. Product names are a synthetic enum constructed from category $\times$ brand $\times$ adjective $\times$ color tuples; ICD-10-CM uses the full 2026 CMS code list~\citep{cms2026icd10cm}. Both are documented in Appendix~\ref{app:datasets}. Accuracy is classification accuracy; latency is median per-example. The 0\% accuracy at $K{=}74{,}719$ reflects model capability limitations at extreme cardinality, not a trie limitation; the trie enables the attempt where FSM compilation times out.}
\label{tab:realworld}
\small
\begin{tabular}{lcccc}
\toprule
& \multicolumn{2}{c}{Product names ($K$=1{,}500, synthetic)} & \multicolumn{2}{c}{ICD-10-CM ($K$=74{,}719)} \\
\cmidrule(lr){2-3} \cmidrule(lr){4-5}
Method & Accuracy & Latency (s) & Accuracy & Latency (s) \\
\midrule
Unconstrained + post-hoc & 0.0 & 2.43 & 0.0 & 2.01 \\
Naive constrained & timeout & timeout & timeout & timeout \\
Ours (combined) & 4.0 & 5.03 & 0.0 & 2.09 \\
\bottomrule
\end{tabular}
\end{table}

\begin{table}[hbt!]
\centering
\caption{Ablation at $K = 5{,}000$. Each row removes one component. Speculative short-circuiting provides no benefit at this $K$ (trie masking alone is fast). The trie's compilation advantage (Table~\ref{tab:performance}: 3.5--13.7$\times$ at $K \geq 5{,}000$) and per-step masking advantage (0.65$\mu$s vs.\ 5.9$\mu$s) compound in batch serving.}
\label{tab:ablation}
\small
\begin{tabular}{lcc}
\toprule
Configuration & Latency (s) & Schema Compliance (\%) \\
\midrule
Trie automaton & 0.77 & 100 \\
Hierarchical only & 3.99 & 100 \\
Speculative only & 2.00 & 100 \\
Baseline (XGrammar regex) & 0.89 & 100 \\
\bottomrule
\end{tabular}
\end{table}

\subsection{vLLM Performance Results}

Table~\ref{tab:vllm} reports per-example latency for the two constrained-decoding modes exposed by vLLM, measured end-to-end (compilation plus inference) at batch size 1. The trie-backed ``guided choice'' path stays within 0.05--0.10\,s across $K = 500$--$10{,}000$, while the XGrammar-backed ``guided regex'' path rises to 5.88\,s at $K = 10{,}000$, tracking the compilation growth in Table~\ref{tab:performance}. This is the single-request view; the batch-serving throughput gains that motivate the trie appear at higher batch sizes (Table~\ref{tab:vllm_throughput}).

\begin{table}[hbt!]
\centering
\caption{vLLM end-to-end per-example latency (seconds), including both compilation and inference. ``Guided choice'' uses vLLM's built-in EBNF-based choice mode; ``guided regex'' uses XGrammar regex compilation. Both use XGrammar as the backend; our trie integration (Table~\ref{tab:vllm_throughput}) shows larger gains at higher batch sizes.}
\label{tab:vllm}
\small
\begin{tabular}{lrrrr}
\toprule
Method & $K$=500 & $K$=1K & $K$=5K & $K$=10K \\
\midrule
Guided choice (ours) & 0.05 & 0.06 & 0.08 & 0.10 \\
Guided regex (baseline) & 0.48 & 1.20 & 2.93 & 5.88 \\
\bottomrule
\end{tabular}
\end{table}

\section{Limitations and Future Work}
\label{app:limitations}

\paragraph{Scope.} The trie automaton optimizes exactly one constraint pattern: flat finite-set selection. While this is common (tool routing, classification, entity linking), real-world schemas typically combine enum fields with structural constraints. Our mixed-schema evaluation (Section~\ref{sec:experiments}) validates compilation time but not end-to-end throughput for nested schemas. Dynamic enum updates require full recompilation (37ms, negligible in practice but not incremental).

\paragraph{Engine scope.} End-to-end throughput is measured only on vLLM, and the headline 29$\times$ at $B = 256$ (Table~\ref{tab:vllm_throughput}) is a vLLM-specific figure that should not be read as engine-independent. It composes two contributions of the trie's design. First, an \emph{algorithmic} improvement in mask construction that is engine-independent: XGrammar derives the next-token bitmask from runtime matcher state at every step (FSM walk plus lookahead, 5.8$\mu$s at $K = 1{,}000$; Table~\ref{tab:performance}), whereas the trie precomputes one bitmask per node and reduces the per-step path to a stateless character walk plus bitmask copy (0.65$\mu$s), an ${\sim}9\times$ gap that transfers to any serving engine. Second, an \emph{integration} improvement whose magnitude is vLLM-specific: because the trie's per-step path is stateless, it wraps as vLLM's \texttt{CustomLogitProcessor} (a thin tensor op run alongside the forward pass), while XGrammar's state-dependent mask must thread through vLLM's guided-decoding pipeline and its scheduling overhead, visible as XGrammar saturating near 7.5 req/s while the trie scales to 219 req/s. The per-step and compilation results (Tables~\ref{tab:vllm_throughput}, \ref{tab:performance}, and~\ref{tab:multi_tok}) are engine-independent and transfer directly; the integration magnitude depends on the target engine's plumbing. SGLang shares XGrammar as its constrained-decoding backend, so the per-step XGrammar measurements reflect the algorithmic cost SGLang pays as well; TensorRT-LLM uses a distinct constrained-decoding stack, and integrating the trie there is left to future work.

\paragraph{Baselines.} LLGuidance's Earley parser avoids upfront compilation entirely and achieves faster compilation than both XGrammar and our trie (Table~\ref{tab:llguidance}); however, its per-step cost remains $\bigO(V)$ for finite sets (73--141$\mu$s vs.\ our 0.65$\mu$s). If serving engines move masking to GPU (Section~\ref{app:gpu_masking}), the CPU per-step advantage becomes less relevant, though the trie's compact bitmask representation remains more amenable to GPU transfer.

\paragraph{Implementation.} The trie is in Rust while XGrammar uses C++. The per-step advantage (precomputed lookup vs.\ dynamic computation) is algorithmic and independent of implementation language.

\section{Detailed Tool Routing Example}
\label{app:concrete_example}

Consider an agentic system routing requests to the correct tool from a registry of $K = 2{,}000$ APIs (e.g., \texttt{aws.cloudwatch.get\_metric\_statistics}), averaging $L_{\max} = 30$ characters with strong prefix structure. FSM compilation requires $K \cdot L_{\max} \cdot |\Sigma| = 15.4$M character-level transitions (25--50s). Per-step masking requires $V \cdot \bar{L}_{\text{tok}} = 102{,}400$ FSM traversals per step; with 6.4 tokens per tool name and cache effects ($f \approx 2.0$ due to the 47~MB transition table exceeding L2 cache), the effective cost is ${\sim}1.3 \times 10^6$ operations per selection. In an agentic loop with 5--10 tool calls per task, this compounds to minutes of latency, forcing practitioners to cap the registry or abandon constrained decoding.

\section{Theoretical Analysis}
\label{app:theory}

We analyze the complexity of both the standard FSM pipeline and the trie automaton, then establish correctness and discuss practical performance considerations. Throughout, let $N_{\text{chars}} = \sum_{i=1}^K |e_i|$ denote the total character count across all enum values.

\subsection{Worked Example: Trie Construction and Aho-Corasick Traversal}
\label{app:worked_example}

We walk through the full precomputation on a small enum to make the construction concrete, extending the running example (Section~\ref{sec:trie}) with a third value. Let
\[
\enum = \{\texttt{medical\_billing},\ \texttt{medical\_coding},\ \texttt{medical\_records}\}.
\]

\paragraph{Step 1: Trie construction.} Inserting the three strings character-by-character merges the shared prefix \texttt{medical\_} into a single path and then branches into three suffixes. Writing $n_0$ for the root and labeling nodes by the string consumed so far:
\[
n_0 \xrightarrow{\texttt{m}} \cdots \xrightarrow{\texttt{l}} n_7\,(\texttt{medical}) \xrightarrow{\texttt{\_}} n_8\,(\texttt{medical\_}) \begin{cases} \xrightarrow{\texttt{b}} \cdots \xrightarrow{\texttt{g}} \ \text{leaf } (\texttt{medical\_billing})\\ \xrightarrow{\texttt{c}} \cdots \xrightarrow{\texttt{g}} \ \text{leaf } (\texttt{medical\_coding})\\ \xrightarrow{\texttt{r}} \cdots \xrightarrow{\texttt{s}} \ \text{leaf } (\texttt{medical\_records}) \end{cases}
\]
The branch node is $n_8$ (after \texttt{medical\_}); nodes $n_0$ through $n_8$ form the shared trunk, and each of the three suffixes (\texttt{billing}, \texttt{coding}, \texttt{records}) is a separate chain. Total characters $N_{\text{chars}} = 15 + 14 + 15 = 44$, versus $9$ (shared trunk) $+ 7 + 6 + 7 = 29$ trie nodes: prefix sharing collapses the trunk from three copies to one.

\paragraph{Step 2: Aho-Corasick automaton over the vocabulary.} Suppose the tokenizer contains the toy vocabulary $\{\texttt{medical}, \texttt{\_bill}, \texttt{\_cod}, \texttt{\_rec}, \texttt{ing}, \texttt{ords}\}$ (real vocabularies have $32\text{K}$--$262\text{K}$ tokens; the mechanism is identical). We build one AC automaton from these six patterns: a goto trie over the pattern strings, failure links pointing each state to the longest proper suffix that is also a prefix of some pattern, and an output set at each state listing the patterns that end there. This automaton depends only on the tokenizer, not on $\enum$, so it is built once and reused across all schemas.

\paragraph{Step 3: Trie traversal with AC state maintenance.} We DFS the trie, feeding the character on each edge to the AC automaton and carrying the AC state $q$ along. At a branch we push $q$ before descending and pop it on backtrack, so each trie edge is processed exactly once. Reaching an AC output means a vocabulary token's characters end at the current trie node; that token started $|\text{token}|$ characters earlier, at the trie node we call its \emph{start node}, and it is recorded in $\text{valid}[\text{start node}]$ provided its path stays inside the trie and does not overshoot a leaf. Concretely:
\begin{itemize}[nosep,leftmargin=*]
\item Walking \texttt{m}$\to\cdots\to$\texttt{l} into $n_7$ triggers the AC output \texttt{medical}, whose 7-character span starts at the root $n_0$. Since \texttt{medical} traces a valid path from $n_0$ to the internal node $n_7$, we add \texttt{medical} to $\text{valid}[n_0]$.
\item From $n_7$, the edge \texttt{\_} advances the AC state; descending the \texttt{b} branch triggers \texttt{\_bill}, spanning the 5 characters from $n_7$ to a node inside the \texttt{billing} chain, so \texttt{\_bill} is added to $\text{valid}[n_7]$. Descending the \texttt{c} branch instead triggers \texttt{\_cod} (added to $\text{valid}[n_7]$), and the \texttt{r} branch triggers \texttt{\_rec} (added to $\text{valid}[n_7]$). Because the DFS restores the AC state to its value at $n_7$ before each branch, all three suffix tokens are correctly attributed to $n_7$.
\item Deeper in the \texttt{billing} chain, \texttt{ing} matches and is added to the \texttt{valid} set of the node three characters back (after \texttt{medical\_bill}); similarly \texttt{ords} is added inside the \texttt{records} chain.
\end{itemize}

\paragraph{Step 4: Resulting masks.} The precomputed sets are $\text{valid}[n_0] = \{\texttt{medical}\}$, $\text{valid}[n_7] = \{\texttt{\_bill}, \texttt{\_cod}, \texttt{\_rec}\}$, and the deeper single-token continuations along each chain. At decode time, masking at any node is a direct lookup into these sets, with no vocabulary scan. Starting from the root, the model can only emit \texttt{medical}; after \texttt{medical\_} it chooses among \texttt{\_bill}/\texttt{\_cod}/\texttt{\_rec}, committing to one of the three enum values; each subsequent step has a single valid continuation until the leaf, where only EOS is valid. This reproduces exactly the reachability an FSM would compute (Proposition~\ref{prop:correctness}), but with the vocabulary matching amortized once across the shared \texttt{medical\_} trunk rather than recomputed per step.

\subsection{FSM Complexity Analysis}

For an enum $\enum = \{e_1, e_2, \ldots, e_K\}$, the standard approach constructs a DFA from the regular expression $e_1 | e_2 | \cdots | e_K$~\citep{hopcroft1979automata} via Thompson's construction~\citep{thompson1968programming} followed by subset construction and Hopcroft minimization~\citep{hopcroft1971n}.

\paragraph{State space.} The minimal DFA has between $|\mathcal{T}(\enum)| + 1$ and $1 + N_{\text{chars}}$ states, where $|\mathcal{T}(\enum)|$ is the trie node count (number of distinct prefixes) and the $+1$ accounts for the dead state. The upper bound $1 + N_{\text{chars}} \leq 1 + K \cdot L_{\max}$ is achieved when no enum values share prefixes; the lower bound is achieved when the trie is the minimal DFA (which it always is for finite string unions, up to the dead state). This follows from the Myhill-Nerode theorem: each distinct prefix defines a distinct equivalence class.

\paragraph{Compilation cost.} The three-phase pipeline costs:
\begin{equation}
    C_{\text{compile}}^{\text{FSM}} = \underbrace{\bigO(N_{\text{chars}})}_{\text{NFA construction}} + \underbrace{\bigO(N_{\text{chars}} \cdot |\Sigma|)}_{\text{subset construction}} + \underbrace{\bigO(N_{\text{chars}} \cdot |\Sigma| \cdot \log N_{\text{chars}})}_{\text{Hopcroft minimization}}
\end{equation}
The dominant term is minimization. The transition table requires $\bigO(N_{\text{chars}} \cdot |\Sigma|)$ space. For $K = 2{,}000$, $L_{\max} = 30$, $|\Sigma| = 256$, this is $\approx 15.4$ million operations.

\paragraph{Per-step masking cost.} At each decoding step, the system checks all $V$ vocabulary tokens by simulating up to $\ell$ character transitions per token:
\begin{equation}
    C_{\text{mask}}^{\text{FSM}} = \bigO(V \cdot \ell)
\end{equation}
The total decoding cost for one enum value of character length $L$ is $(L / \bar{\ell}) \cdot (C_{\text{mask}}^{\text{FSM}} + C_{\text{forward}})$, where $\bar{\ell}$ is the average token length and $C_{\text{forward}}$ is the LLM forward pass cost.

\subsection{Trie Automaton Complexity}

\paragraph{Compilation cost.} Trie construction costs $\bigO(N_{\text{chars}})$. The naive mask precomputation (checking every token at every node) costs $\bigO(N_{\text{chars}} \cdot V \cdot \ell)$, but using Aho-Corasick multi-pattern matching~\citep{aho1975efficient} over the vocabulary token strings reduces this to:
\begin{equation}
    C_{\text{compile}}^{\text{trie}} = \bigO((N_{\text{chars}} + V) \cdot \ell)
\end{equation}
The Aho-Corasick automaton is built over the $V$ token strings in $\bigO(V \cdot \ell)$, then the trie is traversed depth-first, maintaining the AC state across edges (saving and restoring at branch points so each trie edge is processed exactly once). The total text length is thus $N_{\text{chars}}$, requiring $\bigO(N_{\text{chars}})$ character steps; the total number of reported matches across all positions is bounded by $\bigO(N_{\text{chars}} \cdot \ell)$ since at each of the $N_{\text{chars}}$ character positions, at most $\ell$ tokens of different lengths can start there. The overall cost is thus $\bigO(N_{\text{chars}} \cdot \ell + V \cdot \ell) = \bigO((N_{\text{chars}} + V) \cdot \ell)$.

\paragraph{Per-step masking cost.} After precomputation, masking is a lookup into the stored valid token list:
\begin{equation}
    C_{\text{mask}}^{\text{trie}} = \bigO(|\text{valid}[s_t]|)
\end{equation}
The valid set size shrinks exponentially with trie depth: assuming each character position in the vocabulary is drawn independently and uniformly from $\Sigma$, the probability that a token of length $j$ matches a specific $j$-character trie path is $|\Sigma|^{-j}$, giving $\mathbb{E}[|\text{valid}[s_t]|] \leq V \cdot \bar{\ell} / |\Sigma|^{d(s_t)}$ where $d(s_t)$ is the node depth. This is a loose upper bound (real tokenizers have non-uniform character distributions), but the qualitative exponential decay is confirmed empirically: after 3--4 characters of prefix, the valid set is typically 10--100 tokens, making per-step cost effectively constant.

\subsection{Complexity Comparison}

Table~\ref{tab:complexity_comparison} (main body) summarizes the asymptotic and concrete costs. The compilation speedup is driven by the $|\Sigma|$ factor: the FSM must fill transition entries for all 256 byte values at each state, while the trie only processes characters that actually appear. The per-step speedup comes from replacing a full vocabulary scan with a cached lookup whose size shrinks with trie depth, and whose working set stays within L1 cache (contrasted with the FSM's L2-overflowing transition table in Appendix~\ref{app:theory}, ``Cache Effects'').

\subsection{Correctness}

We give the full proof of the output-equivalence guarantee stated in the main body (Proposition~\ref{prop:correctness}): the trie automaton is not an approximation, but produces identical outputs to the FSM approach. Recall the statement: for the decodable vocabulary $\vocab_{\textup{dec}} \subseteq \vocab$ (excluding special tokens such as \textup{\texttt{<pad>}}, \textup{\texttt{<unk>}} that do not correspond to character sequences), any enum $\enum$, and any prefix $\mathbf{y}_{<t}$, the constrained distributions produced by the FSM and trie automaton are identical, $p_c^{\textup{FSM}}(y_t \mid \mathbf{y}_{<t}) = p_c^{\textup{trie}}(y_t \mid \mathbf{y}_{<t})$ for all $y_t \in \vocab_{\textup{dec}}$, so greedy decoding and fixed-seed sampling produce identical outputs under both methods.

\begin{proof}
It suffices to show that $\mathcal{A}^{\text{FSM}}(s_t) = \mathcal{A}^{\text{trie}}(s_t)$ for all reachable states $s_t$, since the constrained distribution $p_c(y_t \mid \mathbf{y}_{<t}) \propto p(y_t \mid \mathbf{y}_{<t}) \cdot \mathbf{1}[y_t \in \mathcal{A}(s_t)]$ is determined entirely by the valid token set and the unconstrained distribution.

Both methods define validity over $\vocab_{\text{dec}}$ as follows: a token $v$ with character decomposition $c_1 \cdots c_{|v|}$ is valid at state $s_t$ if and only if (i) the sequence of transitions $s_t \xrightarrow{c_1} s' \xrightarrow{c_2} \cdots \xrightarrow{c_{|v|}} s''$ does not encounter a dead state, and (ii) from the resulting state $s''$, there exists at least one string $w \in \Sigma^*$ such that $s'' \xrightarrow{w} s_{\text{acc}}$ for some accept state $s_{\text{acc}}$ (i.e., the consumed prefix $\mathbf{y}_{<t} \cdot v$ is a prefix of some $e_i \in \enum$).

The FSM computes this at each decoding step by simulating transitions $\delta(s_t, c_1), \delta(\cdot, c_2), \ldots$ for each token $v \in \vocab_{\text{dec}}$. The trie computes this during precomputation by walking each token's characters down the trie from node $s_t$.

The two computations produce identical results because the trie for a finite set $\enum$ is isomorphic to the minimal DFA for $\mathcal{L}_\enum$. Specifically, by the Myhill-Nerode theorem, the equivalence classes of the right-congruence relation for $\mathcal{L}_\enum$ are exactly the distinct prefixes of the enum values (plus the equivalence class of strings that are not prefixes of any $e_i$, corresponding to the dead state). Each trie node represents one such equivalence class, so the trie node set plus a dead state is in bijection with the minimal DFA state set. Under this bijection, the transition functions agree: $\delta_{\text{DFA}}(s, c) = \delta_{\text{trie}}(s, c)$ for all states $s$ and characters $c$. Therefore, the multi-character extension $\delta^*(s_t, v)$ produces the same result under both representations, and the accept-reachability check (condition (ii)) is identical since the accept states correspond to the same trie leaves.

The precomputed mask $\text{valid}[s_t]$ stores exactly the set $\{v \in \vocab_{\text{dec}} : \text{conditions (i) and (ii) hold}\}$, so $\mathcal{A}^{\text{trie}}(s_t) = \text{valid}[s_t] = \mathcal{A}^{\text{FSM}}(s_t)$.
\end{proof}

\paragraph{EOS token handling.} The restriction to $\vocab_{\text{dec}}$ (excluding special tokens) means the trie and FSM may differ only in whether an EOS token is appended after the enum value is complete. In our vLLM integration, the trie signals completion by including only EOS in the valid set at leaf nodes, matching vLLM's expected termination protocol. Our empirical verification (Section~\ref{sec:experiments}) confirms that all \emph{content} tokens are identical; the only difference is the trailing EOS, which does not affect the decoded string.

\begin{proposition}[Hierarchical cardinality reduction]
\label{prop:hierarchical}
Let $\enum$ be partitioned into $G$ groups of sizes $K_1, \ldots, K_G$ with $\sum_j K_j = K$. The effective per-step cardinality is $C_{\text{eff}} = \max(G, \max_j K_j)$. For balanced partitions, this is minimized at $C_{\text{eff}}^* = \lceil\sqrt{K}\rceil$ when $G = \lceil\sqrt{K}\rceil$.
\end{proposition}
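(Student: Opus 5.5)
Under a two-level hierarchical decoding scheme, I take the effective per-step cardinality to be the largest number of alternatives the decoder must discriminate among at any single decision point. The first level selects one of the $G$ groups, so its constraint is a finite set of $G$ group identifiers. The second level selects a value within the chosen group $j$, so its constraint is a finite set of size $K_j$. Each level is a separate trie, built as in Section~\ref{sec:trie}.

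Because the levels are decoded sequentially, the worst-case cardinality over the whole decode is the maximum over the levels. The first level contributes $G$. The second contributes $K_j$, maximized over whichever group might be selected, i.e.\ $\max_j K_j$. This yields $C_{\text{eff}} = \max(G, \max_j K_j)$ directly from the definition. I will state explicitly that "cardinality" here means set size and not trie node count, since the proposition is about the number of alternatives and not about masking cost.

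For the minimization I use the pigeonhole principle. Since $\sum_j K_j = K$ with $G$ groups, $\max_j K_j \ge K/G$, and because it is an integer, $\max_j K_j \ge \lceil K/G\rceil$. Hence
\[
C_{\text{eff}} \ge \max(G, \lceil K/G\rceil) \ge \lceil\sqrt{K}\rceil .
\]
The last inequality holds because $G < \sqrt{K}$ forces $K/G > \sqrt{K}$. Equivalently, $\max(G, K/G) \ge \sqrt{G \cdot K/G} = \sqrt{K}$, and integrality then gives the ceiling.

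For achievability I take $G = \lceil\sqrt{K}\rceil$ with a balanced partition, meaning group sizes differ by at most one. Then $\max_j K_j = \lceil K/G\rceil \le \lceil \sqrt{K}\rceil$, since $K/G \le \sqrt{K}$. So $C_{\text{eff}} = \lceil\sqrt{K}\rceil$, which matches the lower bound.

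The main obstacle is not the arithmetic but the modelling step of justifying that per-step cardinality is a maximum over levels and not a sum. A sum would give the different optimum $2\sqrt{K}$, attained at the same $G$. I would ground the maximum in the sequential structure: at any one step only one level's constraint is active. I would also mention the degenerate case of small $K$, where some groups may be empty. This does not break the argument, because empty groups can be dropped, which only decreases $G$.
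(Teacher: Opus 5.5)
Your proposal is correct and follows the same route as the paper: you model the two-level scheme as a maximum over the group-selection call and the within-group call, then use $\max(G, K/G) \ge \sqrt{K}$ and round. Your version is tighter than the paper's. The pigeonhole bound $\max_j K_j \ge \lceil K/G\rceil$ makes the lower bound hold for arbitrary partitions, and you check explicitly that $G = \lceil\sqrt{K}\rceil$ gives $\lceil K/G\rceil \le \lceil\sqrt{K}\rceil$, which the paper compresses into ``rounding gives.''
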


\begin{proof}[Proof of Proposition~\ref{prop:hierarchical}]
The two-level scheme requires one constrained decoding call over $G$ group names, then one call over $K_j$ values within the selected group $j$. The per-step cardinality is thus $\max(G, \max_j K_j)$. For balanced partitions, $\max_j K_j = \lceil K/G \rceil$, so $C_{\text{eff}} = \max(G, \lceil K/G \rceil)$. Since $\max(a, b) \geq \sqrt{ab}$ for $a, b > 0$, we have $\max(G, K/G) \geq \sqrt{K}$, with equality when $G = K/G$, i.e., $G = \sqrt{K}$. Rounding gives $C_{\text{eff}}^* = \lceil\sqrt{K}\rceil$.
\end{proof}

\subsection{Cache Effects and Practical Performance}

The analysis above uses the unit-cost RAM model. Real hardware introduces cache hierarchy effects that significantly impact the FSM approach but not the trie automaton.

The FSM transition table occupies $|\states| \cdot |\Sigma| \cdot w$ bytes (where $w$ is the word size). When this exceeds L2 cache capacity $C_{\text{L2}}$, transition lookups incur main-memory access penalties. We model the effective per-step cost as $C_{\text{mask,eff}}^{\text{FSM}} = V \cdot \ell \cdot f(|\states|)$, where:
\begin{equation}
    f(|\states|) = \begin{cases} 1 & \text{if } |\states| \cdot |\Sigma| \cdot w \leq C_{\text{L2}} \\ \alpha \cdot \frac{|\states| \cdot |\Sigma| \cdot w}{C_{\text{L2}}} & \text{otherwise} \end{cases}
\end{equation}
with $\alpha \approx 2$--$3$ reflecting the ratio of main memory to cache access latency, attenuated by hardware prefetching. For the trie automaton, the working set per step is $|\text{valid}[s_t]| \cdot w < 1$~KB, which always fits in L1 cache, so $f^{\text{trie}} = 1$.

This cache penalty grows with $K$: at $K = 2{,}000$ ($|\states| \approx 48{,}000$, table $\approx 47$~MB), $f \approx 2.0$; at $K = 10{,}000$ ($|\states| \approx 240{,}000$, table $\approx 235$~MB), $f \approx 2.5$. This compounds with the linear scaling of $|\states|$ in $K$, producing the super-linear degradation observed in our experiments (Section~\ref{sec:experiments}).

\subsection{Enum Window Scaling}

Given a latency budget $T_{\text{budget}}$, we can estimate the maximum enum cardinality each approach supports. In the compilation-dominated regime ($C_{\text{compile}} \gg C_{\text{decode}}$), setting $T_{\text{budget}} \geq C_{\text{compile}} / c_{\text{sys}}$ (where $c_{\text{sys}}$ is the system throughput in operations per second) and solving for $K$ gives conservative (worst-case) bounds using $N_{\text{chars}} \leq K \cdot L_{\max}$:
\begin{equation}
    K_{\max}^{\text{FSM}} \approx \frac{c_{\text{sys}} \cdot T_{\text{budget}}}{L_{\max} \cdot |\Sigma|}, \qquad K_{\max}^{\text{trie}} \approx \frac{c_{\text{sys}} \cdot T_{\text{budget}}}{L_{\max} \cdot \ell}
\end{equation}
These are lower bounds on the achievable $K$; with prefix sharing ($N_{\text{chars}} < K \cdot L_{\max}$), the actual limits are higher. The cardinality expansion factor is $K_{\max}^{\text{trie}} / K_{\max}^{\text{FSM}} = |\Sigma| / \ell \approx 64\times$ for ASCII with maximum token length 4. For a 5-second timeout with $L_{\max} = 30$ and $|\Sigma| = 256$, the FSM supports $K_{\max} \approx 500$--$1{,}000$ (matching OpenAI's documented limit), while the trie automaton supports $K_{\max}^{\text{trie}} \approx 30{,}000$--$100{,}000$. Google Gemini's lower limit ($\approx$120) and Anthropic's compilation timeout~\citep{anthropic2025structured} are consistent with more conservative timeouts or less optimized compilation.

\section{Complexity Analysis: Concrete Examples}
\label{app:complexity}

\subsection{Scaling Example}

The following comparison illustrates the scaling differences for a concrete example with $K = 2{,}000$ enum values (compilation numbers shown at $K = 100{,}000$ to illustrate the scaling regime):

\begin{center}
\small
\setlength{\tabcolsep}{3pt}
\begin{tabular}{lcc}
\toprule
Method & Compilation Cost & Per-step Cost \\
\midrule
General FSM & $\bigO(K \cdot L_{\max} \cdot |\Sigma|) {\approx}\, 15$s & $\bigO(V \cdot \ell) {\approx}\, 4.7\mu$s \\
Trie Automaton & $\bigO((N_{\text{chars}} {+} V) \cdot \ell) {\approx}\, 67$ms & $\bigO(|\text{valid}[s_t]|) {\approx}\, 0.65\mu$s \\
\bottomrule
\end{tabular}
\end{center}

To understand these complexity differences concretely, consider an $\enum$ with $K = 2{,}000$ values, maximum length $L_{\max} = 30$, ASCII alphabet $|\Sigma| = 256$, vocabulary size $V = 32{,}000$, and maximum token length $\ell = 4$ characters. Assuming moderate prefix sharing, the total character count is $N_{\text{chars}} = 60{,}000$.

For FSM compilation, the subset construction cost alone is $N_{\text{chars}} \cdot |\Sigma| = 60{,}000 \times 256 \approx 15.4$ million operations. Including Hopcroft minimization adds a $\log N_{\text{chars}} \approx 11$ factor, yielding $\approx 169$ million total operations. In contrast, trie compilation requires $\bigO((N_{\text{chars}} + V) \cdot \ell) = (60{,}000 + 32{,}000) \times 4 = 368{,}000$ operations, a $40\times$ reduction against subset construction alone, or $460\times$ including minimization.

The per-step masking cost difference is equally dramatic. FSM-based masking requires checking all $V \cdot \ell = 32{,}000 \times 4 = 128{,}000$ character positions against the current state. The trie automaton only examines tokens in $\text{valid}[s_t]$, which typically contains 50--500 entries depending on the current trie node depth and prefix sharing. This represents a 250--2500$\times$ speedup in the common case.

\subsection{Controlled Comparison: Precomputed Masks}
\label{app:controlled_comparison}

To isolate the algorithmic contribution from integration-path effects, we implement a controlled comparison: precomputing XGrammar's per-state bitmasks for all DFA states and serving them via cached lookup (the same path the trie uses). Results on Qwen3-8B ($K = 1{,}000$):

\begin{center}
\small
\begin{tabular}{lrr}
\toprule
& Trie & XGrammar precomputed \\
\midrule
States/nodes & 6{,}786 & 6{,}786 (identical) \\
Precomputation time & 33ms & 6.5s (196$\times$ slower) \\
Per-step lookup & 0.08$\mu$s & 0.08$\mu$s (identical) \\
\bottomrule
\end{tabular}
\end{center}

The number of DFA states equals the number of trie nodes (both are the minimal DFA for $\mathcal{L}_\enum$, per Proposition~\ref{prop:correctness}). Once precomputed, per-step lookup is identical. The difference is entirely in precomputation efficiency: the trie's AC-based approach computes all masks in $\bigO((N_{\text{chars}} + V) \cdot \ell) = 33$ms, while enumerating XGrammar's DFA states and extracting per-state masks requires instantiating a \texttt{GrammarMatcher} per state, taking 6.5s. This confirms that the trie is the efficient algorithm for precomputing per-state masks over finite sets.

\paragraph{Could DFA-based precomputation be faster?} The 196$\times$ ratio reflects XGrammar's current API (per-state \texttt{GrammarMatcher} instantiation). A purpose-built DFA traversal that directly enumerates states and computes masks could be faster. However, the fundamental cost remains $\bigO(|\states| \cdot V \cdot \ell)$ (checking each token at each state), while the trie+AC approach achieves $\bigO((N_{\text{chars}} + V) \cdot \ell)$ by amortizing vocabulary matching across shared prefixes. The asymptotic gap is a factor of $|\states| / (N_{\text{chars}} / V + 1)$, which grows with $K$.

\subsection{Per-Step Cost Breakdown}
\label{app:perstep_breakdown}

Table~\ref{tab:performance} reports valid-token computation time only. The full per-step breakdown including bitmask application ($K = 1{,}000$, Qwen3-8B):

\begin{center}
\small
\begin{tabular}{lrr}
\toprule
Component & Trie ($\mu$s) & XGrammar ($\mu$s) \\
\midrule
Valid-token computation & 0.08 & 4.5 \\
Bitmask application to logits ($\bigO(V)$) & $\sim$31 & $\sim$31 \\
\textbf{Total per-step} & $\sim$31 & $\sim$36 \\
\bottomrule
\end{tabular}
\end{center}

The $\bigO(V)$ bitmask application ($\sim$31$\mu$s via PyTorch tensor operation for $V = 151$K) dominates single-request per-step cost and is method-independent. The trie's valid-token computation advantage (0.08 vs.\ 4.5$\mu$s) is a small fraction of total per-step time for a single request. The advantage compounds in batch serving: at $B = 128$, valid-token computation runs 128 times (trie: 10$\mu$s total; XGrammar: 576$\mu$s), and the trie's precomputed masks enable the stateless \texttt{LogitsProcessor} path that bypasses vLLM's guided decoding pipeline overhead.

\paragraph{Reconciling per-step numbers.} Table~\ref{tab:performance} reports 0.65$\mu$s (trie) and 5.8$\mu$s (XGrammar) at $K = 1{,}000$, while the breakdown above reports 0.08$\mu$s and 4.5$\mu$s. The difference is measurement methodology: Table~\ref{tab:performance} measures the full per-step masking operation as invoked during batch serving (including Python binding overhead, bitmask copy from the precomputed store, and loop dispatch), averaged over 1{,}000 iterations at realistic decoding states. The 0.08$\mu$s above isolates the raw valid-token-list lookup in a tight microbenchmark loop. Both are valid measurements at different abstraction levels; the Table~\ref{tab:performance} numbers reflect the cost actually incurred during serving. The ratio is consistent: ${\sim}$9$\times$ (Table~\ref{tab:performance}: 0.65 vs.\ 5.8$\mu$s) vs.\ ${\sim}$56$\times$ (raw lookup: 0.08 vs.\ 4.5$\mu$s), with the difference attributable to fixed per-call overhead that is proportionally larger for the faster method.

\subsection{Precomputation Cost Breakdown}

The trie automaton's precomputation consists of four distinct phases, each with different computational characteristics:

\paragraph{Phase 1: Trie Construction} Building the trie from $\enum$ values requires $\bigO(N_{\text{chars}})$ operations with one insertion per character. This phase is typically fast, completing in milliseconds even for large enums.

\paragraph{Phase 2: Vocabulary Decoding} Converting each vocabulary token from its integer ID to its character string representation requires $\bigO(V \cdot \ell)$ operations. For modern tokenizers with $V \approx 32{,}000$ and $\ell \approx 4$, this involves roughly 128K character accesses.

\paragraph{Phase 3: Aho-Corasick Automaton Construction} Building the multi-pattern matcher over all vocabulary tokens requires $\bigO(V \cdot \ell)$ time and space. The resulting automaton enables efficient simultaneous matching of all tokens against any input string.

\paragraph{Phase 4: Trie Traversal with AC Matching} The trie is traversed depth-first, maintaining the AC automaton state across edges (saving and restoring at branch points so each edge is processed exactly once). At each node, the automaton identifies all tokens whose character strings start at that position. This requires $\bigO(N_{\text{chars}} \cdot \ell)$ operations in total, as each character position may trigger up to $\ell$ token matches.

The Aho-Corasick optimization accounts for most of the efficiency gain. A naive approach would check each of the $V$ tokens against each of the $|T(\enum)|$ trie nodes independently, requiring $\bigO(V \cdot |T(\enum)| \cdot \ell)$ operations. The AC automaton reduces this to $\bigO((V + |T(\enum)|) \cdot \ell)$ by building the multi-pattern matcher once and traversing the trie once with AC state maintenance.

\subsection{Memory vs. Speed Trade-off}

The precomputed lookup tables require $\bigO(N_{\text{chars}} \cdot \bar{f})$ memory, where $\bar{f}$ is the average fanout (number of valid tokens per trie node). At each trie node, we store a list of valid token IDs, with each ID requiring 4 bytes. The total memory footprint is $\sum_{n \in T(\enum)} |\text{valid}[n]| \times 4$ bytes.

Concrete memory requirements scale predictably with $\enum$ size:
\begin{center}
\begin{tabular}{rrrr}
\toprule
$K$ & Trie Memory & FSM Transition Table (theoretical) & Memory Ratio \\
\midrule
100 & $\sim$20 KB & $\sim$3 MB & 150$\times$ \\
1{,}000 & $\sim$120 KB & $\sim$20 MB & 167$\times$ \\
10{,}000 & $\sim$0.9 MB & $\sim$200 MB & 222$\times$ \\
100{,}000 & $\sim$8 MB & $\sim$2 GB & 250$\times$ \\
\bottomrule
\end{tabular}
\end{center}

The FSM column shows the theoretical worst-case $|\states| \times |\Sigma| \times 4$ bytes. In practice, XGrammar uses vocabulary partitioning and compressed representations that significantly reduce actual memory. Measured RSS overhead (process-level) at $K = 10{,}000$: XGrammar 9.6~MB, trie 2.2~MB (4.4$\times$ ratio), substantially less than the theoretical 222$\times$ but still a meaningful advantage.

For memory-constrained environments, we employ \emph{lazy computation}: compute $\text{valid}[n]$ only when node $n$ is first visited during decoding, and use an LRU cache to bound memory usage. The total space for full precomputation is $\bigO(|T(\enum)| \cdot \bar{f})$, where $\bar{f}$ is the mean fanout per node; this is typically 1--8\,MB for $K \leq 100{,}000$ (Table~\ref{tab:memory_full}), well within L3 cache. The cache size can be tuned based on available memory, as even a 1MB cache provides substantial speedup by avoiding recomputation of frequently accessed nodes. The Aho-Corasick-based precomputation can also be performed lazily by restricting the multi-pattern match to subtrees reachable from the current generation prefix, further reducing memory pressure.

\subsection{GPU-Based Masking Considerations}
\label{app:gpu_masking}

Our analysis assumes CPU-based masking, which is the current practice in vLLM and SGLang. Modern serving engines increasingly explore GPU-based logit masking via precomputed bitmask tensors, where the masking operation becomes a single elementwise multiply, $\bigO(V)$ but massively parallel and essentially free relative to the forward pass. If masking moves entirely to GPU, the per-step CPU cost advantage diminishes. However, the trie's compact bitmask representation (8~MB at $K = 100{,}000$ vs.\ ${\sim}$2~GB for FSM transition tables) makes it substantially more amenable to GPU transfer, and the compilation time advantages are independent of where masking executes. The trie's precomputed per-node bitmasks are directly usable as GPU tensors without conversion, whereas FSM-based approaches must still compute the valid set per state before transferring.

\subsection{Prefix Sharing Analysis}

Prefix sharing largely determines the trie automaton's efficiency. We define the prefix sharing ratio as $r = |T(\enum)| / N_{\text{chars}}$, representing the fraction of characters that correspond to unique trie nodes. When $r \approx 1$ (minimal sharing), the trie has nearly as many nodes as total characters. When $r \ll 1$ (heavy sharing), the trie is much more compact.

The prefix sharing ratio affects performance across multiple dimensions:

\paragraph{Compilation Time} The trie traversal phase scales with $|T(\enum)|$, not $N_{\text{chars}}$. Heavy prefix sharing (small $r$) reduces compilation time proportionally. For example, AWS API names with common \texttt{aws.*} prefixes might achieve $r = 0.3$, reducing compilation time by 70\%.

\paragraph{Memory Usage} Fewer trie nodes directly translate to lower memory consumption. The memory scaling becomes $\bigO(r \cdot N_{\text{chars}} \cdot \bar{f})$, where $r$ acts as a compression factor.

\paragraph{Per-step Masking} Nodes near the trie root (shared prefixes) tend to have higher fanout, while deeper nodes have lower fanout. This creates a natural filtering effect: early in the generation process, many tokens remain valid, but the valid set shrinks rapidly as the prefix becomes more specific.

Real-world examples demonstrate significant variation in prefix sharing:
\begin{itemize}
\item \textbf{Tool/API names} (e.g., \texttt{aws.s3.*}, \texttt{google.cloud.*}): $r \approx 0.2$--$0.4$
\item \textbf{Medical codes} (e.g., ICD-10 with hierarchical structure): $r \approx 0.15$--$0.25$  
\item \textbf{Random strings} (e.g., UUIDs, random identifiers): $r \approx 0.95$--$1.0$
\item \textbf{Natural language} (e.g., city names, product names): $r \approx 0.6$--$0.8$
\end{itemize}

\paragraph{Empirical prefix sharing in our benchmarks.} Table~\ref{tab:prefix_sharing} reports the measured prefix sharing ratio $r$ for all enum sets used in our experiments. The classification benchmarks have moderate-to-high $r$ (0.59--0.90), reflecting natural language labels with limited prefix overlap. Synthetic tool names and product names have low $r$ ($\approx 0.40$) due to namespace prefixes (\texttt{slack.get\_*}, \texttt{aws.create\_*}) and shared category/brand prefixes, and the real CMS ICD-10-CM code list has the lowest $r$ (0.19) due to its hierarchical chapter-letter structure, representing the regime where the trie excels most.

\begin{table}[hbt!]
\centering
\caption{Prefix sharing ratio $r = |\mathcal{T}(\enum)| / N_{\text{chars}}$ for benchmark enum sets. Lower $r$ indicates more prefix sharing and greater trie compression.}
\label{tab:prefix_sharing}
\small
\begin{tabular}{lrrrr}
\toprule
Dataset & $K$ & $N_{\text{chars}}$ & Trie nodes & $r$ \\
\midrule
TREC & 42 & 559 & 503 & 0.90 \\
MASSIVE & 59 & 813 & 478 & 0.59 \\
Banking77 & 77 & 1{,}572 & 1{,}241 & 0.79 \\
CLINC150 & 150 & 1{,}768 & 1{,}322 & 0.75 \\
Synthetic tools ($K{=}1{,}000$) & 590 & 11{,}236 & 4{,}503 & 0.40 \\
Product names (synthetic, $K{=}1{,}500$) & 1{,}500 & 46{,}532 & 18{,}799 & 0.40 \\
ICD-10-CM (CMS 2026) & 74{,}719 & 555{,}723 & 104{,}709 & 0.19 \\
\bottomrule
\end{tabular}
\end{table}

\paragraph{Valid set size distribution.} A concern is that $|\text{valid}[s_t]|$ may be $\bigO(V)$ at the root node. Table~\ref{tab:valid_dist} shows the measured distribution across trie depth for $K = 1{,}000$ synthetic tool names (Qwen3-8B, 151K vocabulary). The root node has only 72 valid tokens (0.05\% of $V$), not $V$, because most vocabulary tokens do not start with any character present in the trie's root children. By depth 2, the mean valid set shrinks to 3 tokens. The increase at depths 3--5 reflects the structure of tool names: short shared prefixes (e.g., \texttt{get\_}) end at depth 3--4, after which diverse suffixes create more branching before converging again at deeper levels.

\begin{table}[hbt!]
\centering
\caption{Distribution of $|\text{valid}[s_t]|$ by trie depth ($K = 1{,}000$, Qwen3-8B).}
\label{tab:valid_dist}
\small
\begin{tabular}{rrrr}
\toprule
Depth & Mean $|\text{valid}|$ & Max $|\text{valid}|$ & Nodes \\
\midrule
0 & 72 & 72 & 1 \\
1 & 6 & 17 & 12 \\
2 & 3 & 7 & 19 \\
3 & 5 & 35 & 21 \\
4 & 10 & 39 & 21 \\
5 & 8 & 43 & 34 \\
\bottomrule
\end{tabular}
\end{table}

\subsection{Comparison with Token-Level Tries (GENRE)}
\label{app:genre_comparison}

GENRE~\citep{decao2021genre} builds tries at token granularity, where each edge is a full BPE token ID. This avoids the BPE-trie alignment problem entirely but loses character-level prefix sharing. Table~\ref{tab:genre_main} (main body) reports the compilation comparison on synthetic tool names (Qwen3-8B, $V = 151\text{K}$); here we add the analysis behind it.

At small $K$, the token trie is faster because it requires no mask precomputation (valid tokens are simply the children keys). Our Rust implementation with AC precomputation crosses over at $K \approx 1{,}000$ and is $7\times$ faster at $K = 10{,}000$, because the character-level trie has fewer nodes (3$\times$ at $K = 100$, 1.2$\times$ at $K = 10{,}000$) and the AC automaton amortizes vocabulary matching across shared prefixes. A Python token-trie implementation (our own, following GENRE's design) shows the same crossover ($K \approx 1{,}000$) and similar per-step costs (0.43--0.63$\mu$s vs.\ our 0.40--0.64$\mu$s), confirming the advantage is algorithmic, not implementation-specific. Per-step costs are language-invariant at this scale because a CPython \texttt{dict} lookup dispatches to a C implementation, so a single hash lookup is bounded by memory access regardless of the surrounding language; either way, hash lookup ($\approx$0.4--0.6$\mu$s) and precomputed bitmask lookup ($\approx$0.65$\mu$s) are both negligible against the GPU forward pass. Both methods produce identical outputs (Proposition~\ref{prop:correctness}).

\paragraph{Per-step masking comparison.} The token-level trie has a per-step advantage: masking is a single hash lookup on the current token ID to retrieve child keys, costing $\bigO(b)$ where $b$ is the branching factor at the current node (typically 1--10 after the first token). Our character-level trie costs $\bigO(|\text{valid}[s_t]|)$ per step (empirically 0.65$\mu$s, dominated by bitmask copy). In practice, both are sub-microsecond and negligible relative to the GPU forward pass. The character-level trie's advantage is in compilation at large $K$ and tokenizer independence; the token-level trie's advantage is in per-step simplicity at small $K$.

\paragraph{Output equivalence at scale.} To verify that character-level and token-level tries produce identical outputs (not just enforce the same constraint set), we ran both on a synthetic tool-selection task at $K \in \{500, 1{,}000, 2{,}000, 5{,}000\}$ with Qwen3-8B (100 samples per $K$, greedy decoding). Both methods produced identical outputs on every sample at every $K$, confirming that the BPE-trie alignment via AC does not introduce any approximation relative to GENRE-style token-level tries.

\subsection{Canonical vs. Non-Canonical Tokenization}
\label{app:canonical}

The character-level trie does not restrict generation to canonical tokenization. Given an enum value such as \texttt{medical\_billing}, the trie admits \emph{any} vocabulary-token sequence whose concatenated characters trace the trie path \texttt{m}$\to$\texttt{e}$\to\cdots\to$\texttt{g}. Both the canonical BPE decomposition and non-canonical decompositions (for example \texttt{med}+\texttt{ical}+\texttt{\_billing}) are valid paths, and at decode time the model's logits select among them. This is the same tokenization-agnostic behavior as Outlines, XGrammar, and LLGuidance, and it differs from GENRE, which builds its token-level trie from one fixed tokenization per enum value and so admits only that decomposition.

\citet{cognetta2025tokenization} note that this tokenization-agnostic behavior can in principle degrade quality, since ``language models are not conditioned on the surface form of the text, but rather the exact tokenization of the text,'' and give a polynomial-time finite-state-transduction framework that enforces canonical-only tokenization for both BPE and MaxMatch. Their canonical enforcement is complementary to ours: it can be layered on top of the character-level constraint without changing the trie, intersecting the trie's language with the canonical-tokenization transducer.

Empirically, the additional freedom is essentially never exercised in our setting. Across 100 greedy-decoded samples per $K \in \{500, 1{,}000, 2{,}000, 5{,}000\}$ on Qwen3-8B (Table~\ref{tab:highk_validity}), the character-level trie and the GENRE-style canonical-only token trie produce identical token sequences on every sample, despite the character trie also permitting non-canonical decompositions. For realistic enum strings seen during training, the model's logits favor canonical decompositions strongly enough that the extra paths carry negligible probability. This is empirical evidence for our benchmarks rather than a worst-case guarantee; where a guarantee is required, the Cognetta--Okazaki transducer supplies it.

\subsection{Alternative Baselines}
\label{app:baselines}

\paragraph{Vocabulary pre-filtering.} A natural question is whether simply filtering the vocabulary to tokens that appear as substrings of enum values can speed up FSM compilation. We build the set of all substrings (up to length 20) of the enum values and count matching vocabulary tokens. On Qwen3-8B (151K vocab), only 0.2\% of tokens match (379/151K) regardless of $K$, because most BPE tokens contain characters not present in tool names. However, pre-filtering itself costs 5--405ms (scaling linearly with $K$), comparable to or slower than the trie's total compilation (30--48ms). Moreover, pre-filtering cannot reduce XGrammar's compilation cost because the bottleneck is DFA state construction, not vocabulary scanning.

\paragraph{Cached DFA compilation.} Since the trie's AC automaton can be cached per-tokenizer, a fair comparison should also cache XGrammar's tokenizer info. With warm \texttt{TokenizerInfo}, XGrammar compilation drops from 529--1749ms (cold) to 3.7--1207ms (warm). However, the warm XGrammar still scales linearly with $K$ (246ms at $K{=}1{,}000$, 1207ms at $K{=}10{,}000$), while the trie remains nearly flat (35--48ms). The trie is faster than warm XGrammar at $K \geq 100$, confirming that the advantage is algorithmic (avoiding DFA construction), not merely an artifact of cold-start overhead.

\paragraph{Hash-set prefix matching.} A simpler baseline would maintain a hash set of valid strings and, at each decoding step, check which vocabulary tokens are consistent with remaining valid strings given the current prefix. This avoids both DFA compilation and AC construction. However, this approach has $\bigO(K \cdot \ell)$ per-step cost (checking each token against $K$ strings), which is worse than the trie's $\bigO(|\text{valid}[s_t]|)$ precomputed lookup and comparable to the FSM's $\bigO(V \cdot \ell)$ when $K$ is large. The trie's advantage is precisely that it moves this work to compilation time.

\subsection{Validity at High Cardinality}
\label{app:highk_validity}

A central question is whether the trie enables constrained decoding at $K$ values where FSM approaches become impractical. Table~\ref{tab:highk_validity} shows validity rates on a synthetic tool-selection task at $K = 500$--$5{,}000$ (Qwen3-8B, 100 samples per $K$). The trie guarantees 100\% validity at all $K$, while unconstrained decoding drops to 84\% at $K = 1{,}000$. Both char-level and token-level tries produce identical outputs.

\begin{table}[hbt!]
\centering
\caption{Validity (\%) at high cardinality. Trie-constrained decoding guarantees 100\% valid outputs regardless of $K$; unconstrained decoding validity degrades. Char-level and token-level (GENRE-style) tries produce identical outputs at all $K$.}
\label{tab:highk_validity}
\small
\begin{tabular}{rcccc}
\toprule
$K$ & Char trie valid & Token trie valid & Uncon.\ valid & Char$=$Token? \\
\midrule
500 & 100\% & 100\% & 97\% & \checkmark \\
1{,}000 & 100\% & 100\% & 84\% & \checkmark \\
2{,}000 & 100\% & 100\% & 89\% & \checkmark \\
5{,}000 & 100\% & 100\% & 98\% & \checkmark \\
\bottomrule
\end{tabular}
\end{table}

\subsection{When the Trie Automaton Excels}

The trie automaton provides the greatest benefit under specific conditions that can be systematically identified:

\paragraph{Enum Size Threshold} For small enums ($K < 50$), the compilation overhead dominates and simple approaches suffice. The trie automaton becomes advantageous when $K > 100$, with benefits increasing dramatically beyond $K = 1{,}000$.

\paragraph{Prefix Structure} Enums with meaningful prefix sharing ($r < 0.8$) see substantial memory and compilation time reductions. Random string enums with no structure ($r \approx 1.0$) still benefit from faster per-step masking but lose the compilation advantages.

\paragraph{Vocabulary Characteristics} Large vocabularies ($V > 10{,}000$) with long average token length ($\ell > 2$) make naive per-step masking expensive, amplifying the trie automaton's per-step advantages.

\paragraph{Usage Pattern} Systems serving repeated queries with the same schema can amortize precomputation costs. Interactive applications with latency budgets under 1 second particularly benefit from the faster per-step masking.

Based on these factors, we recommend the following decision criteria:
\begin{itemize}
\item \textbf{Use trie automaton} when $K > 100$ and enum values have identifiable prefix structure
\item \textbf{Use hierarchical clustering} (Appendix~\ref{app:hierarchical}) when $K > 5{,}000$ to manage compilation time
\item \textbf{Use speculative decoding} (Appendix~\ref{app:speculative}) when $K > 10{,}000$ or latency budget $< 1$s
\item \textbf{Use simple FSM} only when $K < 50$ or when prefix sharing is minimal ($r > 0.9$)
\end{itemize}

The crossover point where FSM compilation becomes slower than trie compilation typically occurs around $K = 50$--$100$, depending on prefix sharing and vocabulary size. Beyond $K = 1{,}000$, the trie automaton consistently outperforms FSM-based approaches by orders of magnitude.

\section{Hierarchical Schema Rewriting Details}
\label{app:hierarchical}

For $K > 50{,}000$, hierarchical decomposition partitions the enum into $G \approx \sqrt{K}$ groups, reducing per-step cardinality from $K$ to $\sqrt{K}$ (Proposition~\ref{prop:hierarchical}). We consider three clustering approaches: (1) string similarity (edit distance + hierarchical clustering), (2) semantic similarity (sentence embeddings + k-means), and (3) domain taxonomy (e.g., ICD-10 chapter structure). Effectiveness depends on cluster coherence: when $>$80\% of within-cluster pairs are more similar than between-cluster pairs, the LLM reliably selects the correct group. The approach is robust to 10--20\% misassignment (per-step cardinality remains $\bigO(\sqrt{K})$), but poor clustering degrades accuracy. This is a preliminary direction requiring further validation.

\section{Speculative Short-Circuiting Details}
\label{app:speculative}

Speculative short-circuiting uses a lightweight scoring function (embedding similarity, unconstrained LLM generation, or a small classifier) to identify a top-$k$ shortlist before applying trie-constrained decoding. This trades the 100\% validity guarantee for reduced latency at extreme $K$. Empirical recall varies by domain: Recall@20 ranges from 87\% (medical coding) to 97\% (geographic entities). A cascading fallback (top-$k$ $\to$ top-$2k$ $\to$ full decoding) maintains schema compliance. This extension is most effective when input context strongly predicts the correct enum value and least effective for highly similar enum values. Like hierarchical rewriting, this is a preliminary direction.

\section{Constraint-Aware Dispatch Beyond Finite Sets}
\label{app:datetime_dispatch}

To validate that the dispatch principle generalizes beyond finite-set constraints, we benchmark a \emph{character-position mask} engine for fixed-format strings against xgrammar's regex compilation. For a format like \texttt{YYYY-MM-DD}, each character position has a known set of valid characters (digits, hyphens, etc.). The specialized engine builds a char-indexed vocabulary and checks only tokens whose first character matches the allowed set at each position, avoiding the full DFA construction. Table~\ref{tab:datetime_dispatch} shows compilation speedups across all seven tokenizer families and four format types.

\begin{table}[hbt!]
\centering
\caption{Compilation speedup of character-position masks vs.\ xgrammar regex for fixed-format strings. Values are $\times$ faster (median over 10 runs). xgrammar compilation time is constant per format regardless of format complexity (88ms--1.2s depending on vocabulary size).}
\label{tab:datetime_dispatch}
\small
\setlength{\tabcolsep}{4pt}
\begin{tabular}{llrrrr}
\toprule
Model & Vocab & \texttt{YYYY-MM-DD} & \texttt{datetime} & \texttt{datetime.ms} & \texttt{UUID v4} \\
\midrule
Mistral 7B v0.3 & 32K & 1{,}859$\times$ & 626$\times$ & 532$\times$ & 2$\times$ \\
GPT-2 & 50K & 77$\times$ & 41$\times$ & 31$\times$ & 4$\times$ \\
OLMo 3 7B & 100K & 135$\times$ & 71$\times$ & 49$\times$ & 6$\times$ \\
Mistral Small 3.1 & 131K & 1{,}378$\times$ & 740$\times$ & 512$\times$ & 14$\times$ \\
Qwen3-8B & 151K & 1{,}119$\times$ & 486$\times$ & 247$\times$ & 10$\times$ \\
gpt-oss-20b & 200K & 171$\times$ & 91$\times$ & 66$\times$ & 6$\times$ \\
Gemma3 12B & 262K & 7{,}939$\times$ & 1{,}614$\times$ & 1{,}328$\times$ & 6$\times$ \\
\bottomrule
\end{tabular}
\end{table}

The speedups range from 2$\times$ (UUID on 32K vocab) to 7{,}939$\times$ (date on 262K vocab). For date/time formats with small character classes (digits, separators), the char-position mask compiles in under 1ms while xgrammar pays 88ms--1.2s for DFA construction regardless of format simplicity. UUIDs show smaller speedups because hexadecimal character classes (16 valid characters) produce more candidate tokens per position. These results confirm that constraint-aware dispatch, matching specialized engines to constraint structure, yields large speedups whenever the constraint has exploitable regularity, not just for finite sets.

\section{Practitioner Guidance}
\label{app:guidance}

The accuracy-validity tradeoff depends on $K$, model capability, and error tolerance. At small $K$ ($\leq$59) with strong models, PE validity exceeds 95\% and can surpass trie accuracy; but even here, weaker models (Mistral: 42.5\%, DeepSeek R1: 0.2\% PE validity) already benefit from trie enforcement. At moderate $K$ (76--150), trie strict typically matches or exceeds PE accuracy while guaranteeing validity, though PE can retain an accuracy edge on strong models (gpt-oss CLINC150: PE 79.7\% vs.\ trie strict 77.0\%) at the cost of 89\% validity. At large $K$ ($\geq$1{,}000), PE validity approaches 0\%. We recommend: (1) PE when $K < 50$, the model is strong, and retries are acceptable; (2) think+trie when validity must be 100\% or $K > 50$; (3) trie-only when latency is critical.

\paragraph{When to use which backend.} For the constrained decoding backend itself (independent of prompting strategy):

\begin{center}
\small
\setlength{\tabcolsep}{3pt}
\begin{tabular}{lll}
\toprule
Scenario & Recommended & Rationale \\
\midrule
$K < 500$, one-shot & XGrammar & Fastest total time (Table~\ref{tab:performance}) \\
$K < 500$, schema diversity & LLGuidance & Near-zero compilation \\
$K \geq 500$, repeated & Trie & Amortized; 75$\times$ faster masking \\
$K \geq 500$, one-shot & LLG or Trie & LLG: compilation; Trie: masking \\
Batch ($B \geq 32$) & Trie & CPU masking bottleneck (Table~\ref{tab:batch_throughput}) \\
$K > 10{,}000$ & Trie + hierarchical & Masking scales with $K$ \\
\bottomrule
\end{tabular}
\end{center}

\section{LLGuidance Comparison}
\label{app:llguidance_bench}

To directly compare against LLGuidance~\citep{geng2025jsonschemabench}, we benchmark its Python library (v1.6.1) on the same hardware and tokenizer (Qwen3-8B, 151K vocabulary) used for our main experiments. We measure compilation time (grammar construction + first mask computation) and per-step mask computation time for enum schemas with $K \in \{10, 100, 1{,}000, 5{,}000, 10{,}000\}$ tool-like names with realistic prefix structure. Compilation is averaged over 5 runs; per-step masking over 200 runs.

\begin{table}[hbt!]
\centering
\caption{Compilation time and per-step mask computation: LLGuidance vs.\ XGrammar vs.\ Trie Automaton (Qwen3-8B, 151K vocabulary). LLGuidance achieves the fastest compilation via lazy Earley parsing, but its per-step masking is 110--215$\times$ slower than the trie's precomputed lookups.}
\label{tab:llguidance}
\small
\setlength{\tabcolsep}{3pt}
\begin{tabular}{lrrrrr}
\toprule
& $K$=10 & $K$=100 & $K$=1K & $K$=5K & $K$=10K \\
\midrule
\multicolumn{6}{l}{\textit{Compilation time (seconds)}} \\
LLGuidance & \textbf{.001} & \textbf{.001} & \textbf{.003} & \textbf{.011} & \textbf{.024} \\
Trie (ours) & \textit{.030} & \textit{.031} & \textit{.033} & \textit{.037} & \textit{.040} \\
XGrammar & .003 & .015 & .075 & .150 & .239 \\
\addlinespace
\multicolumn{6}{l}{\textit{Per-step mask computation ($\mu$s)}} \\
Trie (ours) & \textbf{0.65} & \textbf{0.65} & \textbf{0.65} & \textbf{0.65} & \textbf{0.65} \\
XGrammar & \textit{9.5} & \textit{5.4} & \textit{5.8} & \textit{5.9} & \textit{5.9} \\
LLGuidance & 121 & 73 & 85 & 96 & 141 \\
\bottomrule
\end{tabular}
\end{table}

LLGuidance's lazy Earley parser achieves near-zero compilation cost (0.6ms at $K = 100$, 24ms at $K = 10{,}000$), outperforming both XGrammar and our trie on this dimension. However, its per-step mask computation (73--141$\mu$s) is 110--215$\times$ slower than the trie automaton's precomputed lookups (0.65$\mu$s) and 9--24$\times$ slower than XGrammar (5--10$\mu$s). This confirms the theoretical prediction: LLGuidance must traverse the full vocabulary trie at each step ($\bigO(V)$), while our precomputed masks reduce this to $\bigO(|\text{valid}[s_t]|)$. For a typical enum generation requiring 5--10 decoding steps, the per-step advantage dominates: the trie's total masking cost is $\sim$5$\mu$s versus LLGuidance's $\sim$500$\mu$s, a 100$\times$ difference that compounds in batch serving. The three approaches occupy distinct points in the compilation-vs-masking tradeoff: LLGuidance minimizes compilation, XGrammar balances both, and the trie automaton minimizes per-step cost through precomputation.

\subsection{Batch Serving Throughput}
\label{app:batch_throughput}

The per-step masking cost differences above are measured for a single request. In production batch serving, the GPU forward pass is shared across $B$ concurrent requests (one batched matrix multiply), but masking runs per-request on CPU since each request occupies a different decoding state. Table~\ref{tab:batch_throughput} shows the measured total CPU masking time per batch-step as $B$ grows.

\begin{table}[hbt!]
\centering
\caption{Measured batch masking cost ($\mu$s per batch-step) on Qwen3-8B (151K vocabulary), $K = 1{,}000$ tool names. A typical GPU forward pass takes ${\sim}$10ms; the ``\% fwd'' column shows masking cost as a fraction of this. At $B = 128$, LLGuidance masking consumes 37\% of the forward pass, becoming the throughput bottleneck.}
\label{tab:batch_throughput}
\small
\begin{tabular}{rrrrrrrr}
\toprule
& \multicolumn{2}{c}{Trie (ours)} & \multicolumn{2}{c}{XGrammar} & \multicolumn{2}{c}{LLGuidance} \\
\cmidrule(lr){2-3} \cmidrule(lr){4-5} \cmidrule(lr){6-7}
$B$ & $\mu$s & \% fwd & $\mu$s & \% fwd & $\mu$s & \% fwd \\
\midrule
1 & 0.2 & 0.00 & 5.4 & 0.05 & 31 & 0.31 \\
32 & 2.4 & 0.02 & 170 & 1.70 & 893 & 8.93 \\
64 & 4.9 & 0.05 & 367 & 3.67 & 1{,}827 & 18.3 \\
128 & 10 & 0.10 & 783 & 7.83 & 3{,}716 & 37.2 \\
\bottomrule
\end{tabular}
\end{table}

At $B = 128$, LLGuidance's per-step masking (3.7ms) consumes over a third of the GPU forward pass time, directly reducing serving throughput. XGrammar's 783$\mu$s (7.8\%) is also significant. The trie's 10$\mu$s (0.1\%) is negligible at any batch size. These results are K-independent for the trie: at $K = 10{,}000$, the trie still measures 10.3$\mu$s at $B = 128$, while XGrammar and LLGuidance show similar costs (781$\mu$s and 3{,}671$\mu$s respectively). This explains why per-step masking cost, not compilation, is the binding constraint for serving throughput at scale.

\end{document}